\newcommand{\bmm}[1]{\ensuremath{\mathbf{#1}}}
\renewcommand{\S}{\ensuremath{\mathcal{S}}}
\newcommand{\BlackBox}{\rule{1.5ex}{1.5ex}}  
\newenvironment{proof}{\par\noindent{\bf Proof\ }}{\hfill\BlackBox\\[2mm]}
\newtheorem{theorem}{Theorem}
\newtheorem{lemma}[theorem]{Lemma}
\newtheorem{corollary}[theorem]{Corollary}
\newtheorem{definition}[theorem]{Definition}
\begin{document}

\title{A Theoretical Perspective on Hyperdimensional Computing}

\author{\name Anthony Thomas \email ahthomas@eng.ucsd.edu \\
\name Sanjoy Dasgupta \email dasgupta@eng.ucsd.edu \\
\name Tajana Rosing \email tajana@eng.ucsd.edu \\
\addr Department of Computer Science\\
University of California, San Diego\\
San Diego, CA 92093, USA
}


\maketitle

\begin{abstract}
\noindent Hyperdimensional (HD) computing is a set of neurally inspired methods for obtaining high-dimensional, low-precision, distributed representations of data. These representations can be combined with simple, neurally plausible algorithms to effect a variety of information processing tasks. HD computing has recently garnered significant interest from the computer hardware community as an energy-efficient, low-latency, and noise-robust tool for solving learning problems. In this review, we present a unified treatment of the theoretical foundations of HD computing with a focus on the suitability of representations for learning.
\end{abstract}


\section{Introduction}

Hyperdimensional (HD) computing is an emerging area at the intersection of computer architecture and theoretical neuroscience \shortcite{kanerva2009hyperdimensional}. It is based on the observation that brains are able to perform complex tasks using circuitry that: (1) uses low power, (2) requires low precision, and (3) is highly robust to data corruption. HD computing aims to carry over similar design principles to a new generation of digital devices that are highly energy-efficient, fault tolerant, and well-suited to natural information processing \shortcite{rahimi2018efficient}.

The wealth of recent work on neural networks also draws its inspiration from the brain, but modern instantiations of these methods have diverged from the desiderata above. The success of these networks has rested upon choices that are not neurally plausible, most notably significant depth and training via backpropagation. Moreover, from a practical perspective, training these models often requires high precision and substantial amounts of energy. While a large body of literature has sought to ameliorate these issues with neural networks, these efforts have largely been designed to address specific performance limitations. By contrast, the properties above emerge naturally from the basic architecture of HD computing.

Hyperdimensional computing focuses on the very simplest neural architectures. Typically, there is a single, static, mapping from inputs $x$ to much higher-dimensional ``neural'' representations $\phi(x)$ living in some space $\mathcal{H}$. All computational tasks are performed in $\mathcal{H}$-space, using simple, operations like element-wise additions and dot products. The mapping $\phi$ is often taken to be random, and the embeddings have coordinates that have low precision; for instance, they might take values $-1$ and $+1$. The entire setup is elementary and lends itself to fast, low-power hardware realizations.

Indeed, a cottage industry has emerged around developing optimized implementations of HD computing based algorithms on hardware accelerators \shortcite{imani2017exploring,rahimi2018efficient,gupta2018felix,manuel2019hardware,salamat2019f5,imani2019fach}. Broadly speaking, this line of work touts HD computing as an energy efficient, low-latency, and noise-resilient alternative to conventional realizations of general purpose ML algorithms like support vector machines, multilayer perceptrons, and nearest-neighbor classifiers. While this work has reported impressive performance benefits, there has been relatively little formal treatment of HD computing as a tool for general purpose learning.

This review has two broad aims. The first, more modest, goal is to introduce the area of hyperdimensional computing to a machine learning audience. The second is to develop a particular mathematical framework for understanding and analyzing these models. The recent literature has suggested a variety of different HD architectures that conform to the overall blueprint given above, but differ in many important details. We present a unified treatment of many such architectures that enables their properties to be compared. The most basic types of questions we wish to answer are:
\begin{enumerate}
\itemsep0em
\item How can individual items, sets of items, and sequences of items, be represented and stored in $\mathcal{H}$-space, in a manner that permits reliable decoding?
\item What kinds of noise can be tolerated in $\mathcal{H}$-space?
\item What kinds of structure in the input $x$-space are preserved by the mapping to $\mathcal{H}$-space?
\item What is the power of linear separators on the $\phi$-representation?
\end{enumerate}
Some of these questions have been introduced in the HD computing literature and studied in isolation \shortcite{plate2003holographic,gallant2013representing,kleyko2018classification,frady2018theory}. In this work we address these questions formally and in greater generality.

\section{Introduction to HD Computing}
\label{section:background}

In the following section we provide an introduction to the fundamentals of HD computing and provide some brief discussion of its antecedents in the neuroscience literature.

\subsection{High-Dimensional Representations in Neuroscience}
\label{subsec:expand-and-sparsify}

Neuroscience has proven to be a rich source of inspiration for the machine learning community: from the perceptron~\shortcite{rosenblatt.58}, which introduced a simple and general-purpose learning algorithm for linear classifiers, to neural networks~\shortcite{rumelhart.mcclelland.86}, to convolutional architectures inspired by visual cortex~\shortcite{fukushima1980neocognitron}, to sparse coding~\shortcite{olshausen.field.96} and independent component analysis~\shortcite{bell.sejnowski.95}. One of the most consequential discoveries from the neuroscience community, underlying much research at the intersection of neuroscience and machine learning, has been the notion of {\it high-dimensional distributed representations} as the fundamental data structure for diverse types of information. In the neuroscience context, these representations are also typically {\it sparse}. 

To give a concrete example, the sensory systems of many organisms have a critical component consisting of a transformation from relatively low dimensional sensory inputs to much higher-dimensional \emph{sparse} representations. These latter representations are then used for subsequent tasks such as recall and learning. In the olfactory system of the fruit fly~\shortcite{masse2009olfactory,turner2008olfactory,wilson2013early,caron2013random}, the mapping consists of two steps that can be roughly captured as follows:
\begin{enumerate}
    \item An input $\bmm{x} \in \mathbb{R}^{n}$ is collected via a sensory organ and mapped under a \emph{random linear transformation} to a point $\phi(\bmm{x}) \in \mathbb{R}^{d}$ ($d \gg n$) in a high-dimensional space.
    \item The coordinates of $\phi(\bmm{x})$ are ``sparsified'' by a thresholding operation which just retains the locations of the largest $k$ coordinates.
\end{enumerate}
In the fly, the olfactory input is a roughly 50-dimensional vector ($n=50$) corresponding to different types of odor receptor neurons while the sparse representation to which it is mapped is roughly 2,000-dimensional ($d=2000$). A similar ``expand-and-sparsify'' template is also found in other species, suggesting that this process somehow exposes the information present in the input signal in a way that is amenable to learning by the brain \shortcite{stettler2009representations,olshausen2004sparse,chacron2011efficient}. The precise mechanisms by which this occurs are still not fully understood, but may have close connections to some of the literature on the theory of neural networks and kernel methods~\shortcite{cybenko.89,barron.93,rahimi2008random}.

\subsection{HD Computing}
\label{subsec:hd-computing}

\begin{figure}
    \usetikzlibrary{positioning}
    \centering
    \begin{tikzpicture}[
    nocolor/.style={rectangle, draw, thick, minimum size=7mm},
    sharednode/.style={rectangle, draw=green!60, fill=green!10, thick, minimum size=6mm},
    hdnode/.style={rectangle, draw=orange!60, fill=orange!10, thick, minimum size=6mm},
    ]
    \node[align=center] (input) {Input Data: \\ $x \in \mathcal{X}$};
    \node[align=center,sharednode,dotted]  (encoding) [right=of input] {HD Encoding: \\ $\phi : \mathcal{X} \rightarrow \mathcal{H}$};
    \node[align=center,hdnode,dotted,minimum width=40mm] (memory) [right=of encoding] {Memory and \\ Data Structures $\in \mathcal{H}$};
    \node[align=center,sharednode,dotted,minimum height=12mm] (decoding) [right=of memory] {HD Decoding};
    \node[align=center,hdnode,dotted,anchor=mid,text centered,minimum width=40mm] (algorithms) [below= of memory] {HD Algorithms: \\ Learning/Reasoning};
    \node[align=center,minimum height=10mm,minimum width=18mm] (output) [below=of decoding] {Output};
    \node[align=center,minimum height=10mm,minimum width=18mm] (noise) [above=of memory] {Noise/Corruption: \\$\Delta \in \mathcal{H}$};
    
    \node[hdnode] (labelhd) [below=of input] {};
    \node[sharednode] (labelshared) [below=1mm of labelhd] {};
    \node[align=center] [right=1mm of labelhd] {Entirely in $\mathcal{H}$-space};
    \node[align=center] [right=1mm of labelshared] {Mixed in $\mathcal{H},\mathcal{X}$-space};
    
    \draw[->,line width=0.3mm] (input)      -- (encoding);
    \draw[->,line width=0.3mm] (encoding)   -- (memory);
    \draw[->,line width=0.3mm] (memory)     -- (decoding);
    \draw[->,line width=0.3mm] (decoding)   -- (output);
    \draw[->,line width=0.3mm] ([xshift=14mm]memory.south west) -- ([xshift=14mm]algorithms.north west);
    \draw[->,line width=0.3mm] ([xshift=-14mm]algorithms.north east) -- ([xshift=-14mm]memory.south east);
    \draw[->,line width=0.3mm] (algorithms.east) -- ([yshift=0.5mm]output.west);
    \draw[->,line width=0.3mm] (noise) -- (memory);
    \end{tikzpicture}
    \caption{The flow of data in HD computing. Data is mapped from the input space to HD-space under an encoding function $\phi : \mathcal{X} \rightarrow \mathcal{H}$. HD representations of data are stored in data structures and may be corrupted by noise or hardware failures. HD representations can be used as input for learning algorithms or other information processing tasks and may be decoded to recover the input data.}
    \label{fig:hd-overview}
\end{figure}
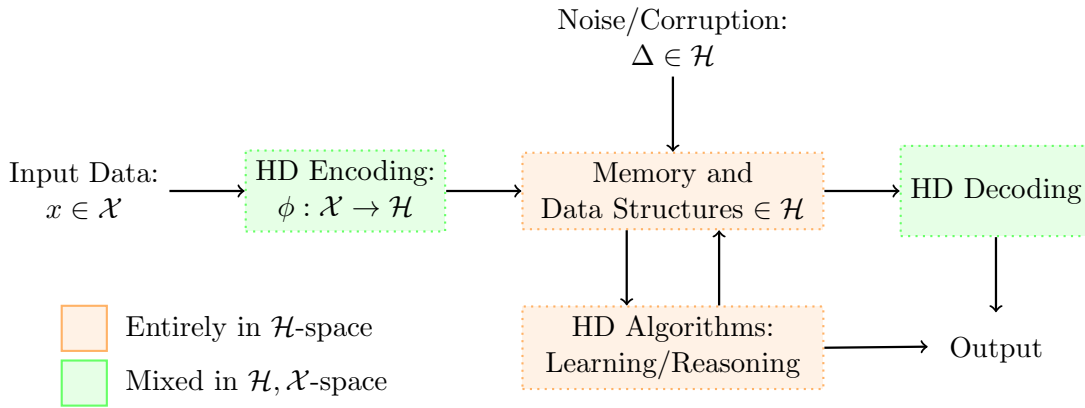

The notion of high-dimensional, distributed, data representations has engendered a number of computational models that have collectively come to be known as \emph{vector symbolic architectures} (VSA) \shortcite{levy2008vector}. In general, VSAs provide a systematic way to generate and manipulate high-dimensional representations of symbols so as to implement cognitive operations like association between related concepts. Notable examples of VSAs include ``holographic reduced representations'' \shortcite{plate1995holographic,plate2003holographic}, ``binary spatter codes'' \shortcite{kanerva1994spatter,kanerva1995family}, and ``matrix binding of additive terms'' \shortcite{gallant2013representing}. HD computing can be seen as a successor to these early VSA models, with a strong additional slant towards hardware efficiency. While our treatment focuses primarily on recent work on HD computing, many of our results apply to these earlier VSA models as well.

An overview of data-flow in HD computing is given in Figure \ref{fig:hd-overview}. The first step in HD computing is encoding, which maps a piece of input data to its high-dimensional representation under some function $\phi : \mathcal{X} \rightarrow \mathcal{H}$. The nature of $\phi$ depends on the type of input and the choice of $\mathcal{H}$. In this review, we consider inputs consisting of sets, sequences, and structures composed from a finite alphabet as well as vectors in a Euclidean space. The space $\mathcal{H}$ is some $d$-dimensional inner-product space defined over the real numbers or a subset thereof. Work in the literature on both HD computing and traditional neural networks has also explored complex-valued embeddings \shortcite{weiss2016neural,parcollet2018quaternion,zhang2021beyond}. However, we here focus on the more common case of real-valued embeddings. For computational reasons, it is common to restrict $\mathcal{H}$ to be defined over integers in a limited range $[-b,b]$. We emphasize that the dimension of $\mathcal{H}$ need not, in general, be greater than that of $\mathcal{X}$. Indeed, in several cases the encoding methods discussed can be used to reduce the dimension of the data.

The HD representations of data can be manipulated using simple element-wise operators. Two common and important such operations are ``bundling'' and ``binding.'' The bundling operator is used to compile a set of elements in $\mathcal{H}$ and takes the form of a function $\oplus : \mathcal{H} \times \mathcal{H} \rightarrow \mathcal{H}$. The function takes two points in $\mathcal{H}$ and returns a third point that is similar to both operands. The binding operator is used to create ordered tuples of points in $\mathcal{H}$ and is likewise a function $\otimes : \mathcal{H} \times \mathcal{H} \rightarrow \mathcal{H}$. The function takes a pair of points in $\mathcal{H}$ as input, and produces a third point dissimilar to both operands. We make these notions more precise in our subsequent discussion of encoding.

Given the HD representation $\phi(\mathcal{S})$ of a set of items $\mathcal{S} \subset \mathcal{X}$ (produced by bundling the items), we may be interested to query the representation to determine if it contains the encoding of some $x \in \mathcal{X}$. To do so, we compute a metric of similarity $\rho(\phi(x), \phi(\mathcal{S}))$ and declare that the item is present in $\mathcal{S}$ if the similarity is greater than some critical value. This process can be used to decode the HD representation so as to recover the original points in $\mathcal{X}$ \shortcite{plate2003holographic,frady2018theory}. We may additionally wish to assert that we can decode reliably even if $\phi(\mathcal{S})$ has been corrupted by some noise process. One of our chief aims in this paper is to mathematically characterize sufficient conditions for robust decoding under different noise models and input data types.

Beyond simply storing and recalling specific patterns, HD representations may also be used for learning. HD computing is most naturally applicable to classification problems. Suppose we are given some collection of labeled examples $\mathcal{S} = \{(x_{i}, y_{i})\}_{i=1}^{N}$, where $x_{i} \in \mathcal{X}$ and $y_{i} \in \{c_{i}\}_{i=1}^{K}$ is a categorical variable indicating the class label of a particular $x_{i}$. One simple form of HD classification bundles together the data corresponding to a particular class to generate a ``prototypical'' example for the class \shortcite{kanerva2009hyperdimensional,kleyko2018classification,rahimi2018efficient}:
\begin{gather}
    \label{eqn:training-bundle}
    \phi(c_{k}) = \bigoplus_{i \,:\, y_{i} = c_{k}} \phi(x_{i})
\end{gather}
The resulting $\phi(c_{k})$ are sometimes quantized to lower precision or sparsified via a thresholding operation. A nice feature of this scheme is that it is extremely simple to implement in an on-line fashion: that is, on streaming data arriving continuously over time \shortcite{rahimi2018efficient}. It is common to fine-tune the class prototypes using a few rounds of perceptron training \shortcite{imani2017voicehd,imani2019adapthd}. Given some subsequent piece of query data $x_{q} \in \mathcal{X}$ for which we do not know the correct label, we simply return the label of the most similar prototype:
\[
    k^{\star} = \underset{k \in 1,\ldots,K}{\text{argmax }} \rho(\phi(x_{q}),\phi(c_{k})).
\]
The similarity metric $\rho$ is typically taken to be the dot-product, with the operands normalized if necessary. Thus, on the whole, the scheme is quite similar to classical statistical methods like naive Bayes and Fisher's linear discriminant. In Section \ref{subsec:hd-kernels}, we consider properties of the HD encoding that can make linear models more powerful in HD space than in the original space.

HD computing and closely related techniques have been applied to a wide variety of practical problems in fields ranging from bio-signal processing \shortcite{rahimi2016biosignal,imani2017voicehd}, to natural language processing \shortcite{sahlgren2005introduction}, and robotics \shortcite{mitrokhin2019learning,neubert2019introduction}. We are here concerned with a more abstract treatment that focuses on the basic properties of HD computing and will not attempt to survey this literature. The interested reader is referred to \shortcite{rahimi2016biosignal,kleyko2018classification} for discussions related to practical aspects of HD computing.

\section{Encoding and Decoding Discrete Data}
\label{section:set-encoding}

A central object in HD computing is the mapping from inputs to their high-dimensional representations. The design of this mapping, typically referred to as ``encoding'' in the literature on HD computing, has been the subject of considerable research. There is a wide range of possible encoding methods. Some of these have been introduced in the HD computing literature and studied in isolation \shortcite{plate2003holographic,gallant2013representing,kleyko2018classification}. In this review, we present a novel unifying framework in which to study these mappings and to characterize their key properties in a non-asymptotic setting. We first discuss the encoding and decoding of \emph{sets} in some detail. Many HD encoding procedures for more complex data types such as sequences essentially amount to transforming the data into a set and then applying the standard set-encoding method.

\subsection{Finite Sets}

\noindent Let $\mathcal{A} = \{a_{i}\}_{i=1}^{m}$ be some finite alphabet of $m$ symbols. Symbols $a \in \mathcal{A}$ are mapped to $\mathcal{H}$ under an encoding function $\phi : \mathcal{A} \rightarrow \mathcal{H}$. Our goal in this section is to consider the encoding of sets $\mathcal{S}$ whose elements are drawn from $\mathcal{A}$. The HD representation of $\mathcal{S}$ is constructed by superimposing the embeddings of the constituent elements using the bundling operator $\oplus : \mathcal{H} \times \mathcal{H} \rightarrow \mathcal{H}$. The encoding of $\mathcal{S}$ is defined to be $\phi(\mathcal{S}) = \oplus_{a\in\mathcal{S}} \phi(a)$. We first focus on the intuitive setting in which $\oplus$ is the element-wise sum and then address other forms of bundling.

To determine if some $a \in \mathcal{A}$ is contained in $\mathcal{S}$, we check if the dot product $\langle \phi(a), \phi(\mathcal{S}) \rangle$ exceeds some fixed threshold. If the codewords $\{\phi(a): a \in \mathcal{A}\}$ are orthogonal and have a constant length $L$, then we have $\langle \phi(a), \phi(\mathcal{S}) \rangle = L^2 \, \mathbbm{1}(a \in \mathcal{S})$, where $\mathbbm{1}$ is the indicator function which evaluates to one if its argument is true and zero otherwise. However, when the codewords are not perfectly orthogonal, we have $\langle \phi(a), \phi(\S) \rangle = L\mathbbm{1}(a \in \mathcal{S}) + \Delta$, where $\Delta$ is the ``cross-talk'' caused by interference between the codewords. In order to decode reliably, we must ensure the contribution of the cross-talk is small and bounded. We formalize this using the notion of incoherence popularized in the sparse coding literature. We define incoherence formally as \shortcite{donoho2005stable}:
\begin{definition}
\label{def:coherence}
\textbf{Incoherence}. For $\mu \geq 0$, we say $\phi: \mathcal{A} \to \mathcal{H}$ is $\mu$-incoherent if for all distinct $a,a' \in \mathcal{A}$, we have
\[
|\langle \phi(a), \phi(a') \rangle| \leq \mu L^2
\]
where $L = \min_{a \in \mathcal{A}} \|\phi(a)\|$.
\end{definition}
\noindent When $d \geq m$, it is possible to have codewords that are mutually orthogonal, whereupon $\mu = 0$. In general, we will be interested in results that do not require $d \geq m$.

\subsubsection{Exact Decoding of Sets}

In the following section, we show how the cross-talk can be bounded in terms of the incoherence of $\phi$, and use this to derive a simple threshold rule for exact decoding. 
\begin{theorem}
\label{thm:decodability}
Let $L = \min_{a \in \mathcal{A}} \|\phi(a)\|$ and let the bundling operator be the element wise sum. To decode whether an element $a$ lies in set $S$, we use the rule
\[
\langle \phi(a), \phi(S) \rangle \geq \frac{1}{2} L^2.
\]
This gives perfect decoding for sets of size $\leq s$ if $\phi$ is $1/(2s)$-incoherent.
\end{theorem}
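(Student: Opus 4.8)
The plan is to expand the decoding statistic directly. Fix an arbitrary symbol $a \in \mathcal{A}$ and an arbitrary set $S \subseteq \mathcal{A}$ with $|S| \le s$. Since $\phi(S) = \sum_{a' \in S}\phi(a')$ (the bundling operator is the element-wise sum) and the inner product is bilinear, we have $\langle \phi(a), \phi(S) \rangle = \sum_{a' \in S}\langle \phi(a), \phi(a')\rangle$. From here I would split into the two cases $a \in S$ and $a \notin S$ and, in each, invoke $\mu$-incoherence with $\mu = 1/(2s)$ to control the off-diagonal (cross-talk) terms, showing that the threshold $\frac12 L^2$ lies strictly above the non-membership values and at or below the membership values.

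In the case $a \in S$, I would isolate the diagonal term $\langle \phi(a), \phi(a)\rangle = \|\phi(a)\|^2 \ge L^2$ (using $L = \min_{a}\|\phi(a)\|$), leaving at most $|S|-1 \le s-1$ cross terms, each of absolute value at most $\mu L^2$. The triangle inequality then gives
\[
\langle \phi(a), \phi(S)\rangle \;\ge\; L^2 - (s-1)\,\frac{L^2}{2s} \;=\; \frac{s+1}{2s}\,L^2 \;>\; \frac12 L^2,
\]
so the rule correctly reports membership. In the case $a \notin S$, every one of the at most $s$ terms in the sum is cross-talk, so
\[
\bigl|\langle \phi(a), \phi(S)\rangle\bigr| \;\le\; |S|\cdot \mu L^2 \;\le\; s \cdot \frac{L^2}{2s} \;=\; \frac12 L^2,
\]
and the rule correctly reports non-membership. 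Combining the two cases shows that the threshold $\frac12 L^2$ separates the membership and non-membership values for every set of size at most $s$, which is the claim.

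The argument is elementary, so there is no deep obstacle; the only point that needs care is the boundary in the non-membership case, where the bound $|\langle \phi(a),\phi(S)\rangle| \le \frac12 L^2$ is attained exactly when $|S| = s$ and all $s$ cross inner products have the same sign and meet the incoherence bound with equality. I would dispatch this by noting that any strictly smaller coherence (equivalently, any $\mu < 1/(2s)$, or the strict version of Definition~\ref{def:coherence}) makes the separation strict, and that the membership side already has strict slack since $\frac{s+1}{2s} > \frac12$. I would also be careful to use $\|\phi(a)\|^2 \ge L^2$ rather than equality, since the codewords are not assumed to share a common length, and to use $|S|-1$ rather than $|S|$ cross terms in the membership case — this is exactly what makes $\mu = 1/(2s)$ (as opposed to a smaller value) sufficient.
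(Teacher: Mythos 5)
Your proof is correct and follows essentially the same route as the paper's: expand $\langle \phi(a),\phi(S)\rangle$ by bilinearity, isolate the diagonal term when $a \in S$, and bound the cross-talk terms by $\mu L^2$ each. If anything you are slightly more careful than the paper --- you count $s-1$ rather than $s$ cross terms in the membership case and you flag the boundary case where the non-membership sum can equal $\tfrac12 L^2$ exactly when $\mu = 1/(2s)$, a point the paper sidesteps by deriving the strict condition $\mu < 1/(2s)$ in its final line.
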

\begin{proof}
Consider some symbol $a$. Then:
\[
    \langle \phi(a), \phi(\S) \rangle = \mathbbm{1}(a \in \mathcal{S})\langle \phi(a), \phi(a) \rangle + \sum_{a' \in \mathcal{S}\setminus\{a\}} \langle \phi(a), \phi(a') \rangle
\]
If $a \in \mathcal{S}$, then the above is lower bounded by $L^{2} - sL^{2}\mu$, where $\mu$ is the incoherence of $\phi$. Otherwise, it is upper bounded by $sL^{2}\mu$. So we decode perfectly if $sL^{2}\mu < L^2/2$, or $\mu < 1/(2s)$.
\end{proof}

\subsubsection{Random Codebooks}

\noindent In practice, each $\phi(a)$ is usually generated by sampling from some distribution over $\mathcal{H}$ or a subset of $\mathcal{H}$ \shortcite{kanerva2009hyperdimensional,kleyko2018classification,rahimi2018efficient}. We typically require that this distribution is factorized so that coordinates of $\phi(a)$ are i.i.d.. Intuitively, the incoherence condition stipulated in Theorem \ref{thm:decodability} will hold if dot products between two different codewords are concentrated around zero. Furthermore, we would like it to be the case that this concentration occurs quickly as the encoding dimension is increased. It turns out that a fairly broad family of simple distributions satisfies these properties.

As an example, suppose $\phi(a)$ is sampled from the uniform distribution over $\{\pm 1\}^{d}$, which we denote $\phi(a)\sim\{\pm 1\}^{d}$. In this case, $L = \sqrt{d}$ exactly, and a direct application of Hoeffding's inequality and the union bound yields:
\[
    \mathbb{P}(\text{$\exists$ distinct $a,a' \in \mathcal{A}$ s.t.\ } |\langle \phi(a),\phi(a') \rangle| \geq \mu d) \leq m^{2}\exp\left( - \frac{\mu^{2}d}{2} \right).
\]
(Recall that $m = |\mathcal{A}|$.) Stated another way, with high probability $\mu = O(\sqrt{(\ln m)/d})$, meaning that we can make $\mu$ as small as desired by increasing $d$.

In fact, the same basic approach holds for the much broader class of \emph{sub-Gaussian} distributions, which can be characterized as follows \shortcite{wainwright2019high}:
\begin{definition}
\textbf{Sub-Gaussian Random Variable}. A random variable $X \sim P_{X}$ is said to be sub-Gaussian if there exists $\sigma \in \mathbb{R}^{+}$, referred to as the sub-Gaussian parameter, such that:
\[
    \mathbb{E}[\exp \left( \lambda(X - \mathbb{E}[X]) \right)] \leq \exp\left( \frac{\sigma^{2}\lambda^{2}}{2}\right) \text{ for all } \lambda \in \mathbb{R}.
\]
\end{definition}
Intuitively, the tails of a sub-Gaussian random variable decay at least as fast those of a Gaussian. We say the encoding $\phi$ is $\sigma$-sub-Gaussian if $\phi(a)$ is generated by sampling its $d$ coordinates independently from the same sub-Gaussian distribution with parameter $\sigma$. We say $\phi$ is ``centered'' if the distribution from which it is sampled is of mean zero. In general, we assume $\phi$ is centered unless stated otherwise.

Codewords drawn from a sub-Gaussian distribution have the useful property that their lengths concentrate fairly rapidly around their expected value. This concentration is, in general, worse than sub-Gaussian but well behaved nonetheless. The following result is well known but we reiterate it here as it is useful for our subsequent discussion. A proof is available in the appendix.
\begin{theorem}
    \label{thm:sub-gaussian-lengths}
    Let $\phi$ be centered and $\sigma$-sub-Gaussian. Then:
    \[
        \mathbb{P}(\exists\, a \in \mathcal{A} \text{ s.t. } |\|\phi(a)\|_{2}^{2} - \mathbb{E}[\|\phi(a)\|_{2}^{2}]| \geq t) \leq 2m\exp\left( -c\min\left\{\frac{t^{2}}{d\sigma^{4}}, \frac{t}{\sigma^{2}} \right\} \right)
    \]
    for some positive absolute constant $c$.
\end{theorem}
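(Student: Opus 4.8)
The plan is to reduce the statement to a Bernstein-type tail bound for a sum of independent sub-exponential random variables and then take a union bound over the $m$ symbols. First I would fix a single symbol $a \in \mathcal{A}$ and write $\|\phi(a)\|_2^2 = \sum_{i=1}^d Z_i$, where $Z_i = \phi(a)_i^2$. By the factorized-sampling assumption the $Z_i$ are i.i.d., and the object to control is the deviation $W = \sum_{i=1}^d (Z_i - \mathbb{E}[Z_i])$.

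The key structural fact is that the (centered) square of a centered $\sigma$-sub-Gaussian variable is sub-exponential. Concretely, if $X$ is $\sigma$-sub-Gaussian then its Orlicz norm satisfies $\|X\|_{\psi_2} \lesssim \sigma$, hence $\|X^2\|_{\psi_1} = \|X\|_{\psi_2}^2 \lesssim \sigma^2$, and re-centering by $\mathbb{E}[X^2]$ changes this only by an absolute constant; equivalently, one bounds $\mathbb{E}[\exp(\lambda(X^2 - \mathbb{E}[X^2]))] \le \exp(C\sigma^4\lambda^2/2)$ for $|\lambda| \le c/\sigma^2$ directly from the MGF definition. Either way, each $Z_i - \mathbb{E}[Z_i]$ is sub-exponential with parameters of order $(\sigma^2,\sigma^2)$. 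I would then invoke the standard additivity of sub-exponential parameters: the sum $W$ of $d$ independent mean-zero such variables is sub-exponential with parameters of order $(\sqrt{d}\,\sigma^2, \sigma^2)$, and Bernstein's inequality yields
\[
\mathbb{P}(|W| \ge t) \le 2\exp\left(-c\min\left\{\frac{t^2}{d\sigma^4},\,\frac{t}{\sigma^2}\right\}\right)
\]
for an absolute constant $c > 0$. This is precisely the single-symbol version of the claim. Finally, a union bound over the $m$ choices of $a \in \mathcal{A}$ contributes a factor $m$, giving the stated $2m$ prefactor.

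The main obstacle is the first technical step: verifying cleanly that $X^2 - \mathbb{E}[X^2]$ is sub-exponential with parameters controlled by $\sigma^2$, starting only from the MGF definition of sub-Gaussianity in the text (which bounds $\mathbb{E}[\exp(\lambda(X - \mathbb{E}[X]))]$ but says nothing directly about moments of $X^2$). This is routine — it follows from the Orlicz-norm equivalences, or from bounding the moments $\mathbb{E}[X^{2k}]$ via the sub-Gaussian tail and summing the resulting series for the MGF — but it is where the constants are pinned down and, in particular, where the $\sigma^4$ (rather than $\sigma^2$) in the Gaussian-regime exponent originates. Everything after that is standard bookkeeping with off-the-shelf concentration lemmas.
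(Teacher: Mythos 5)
Your argument is correct, but it takes a different route from the paper. The paper's proof is a one-liner: it cites the Hanson--Wright inequality for quadratic forms $\bmm{x}^{T}\bmm{A}\bmm{x}$ of sub-Gaussian vectors, specializes to $\bmm{A} = \bmm{I}_{d}$ (so that $\|\bmm{A}\|_{F}^{2} = d$ and $\|\bmm{A}\| = 1$, producing exactly the two regimes $t^{2}/(d\sigma^{4})$ and $t/\sigma^{2}$), and union bounds over the $m$ symbols. You instead prove the needed special case from scratch: write $\|\phi(a)\|_{2}^{2}$ as a sum of i.i.d.\ terms, show each centered square is sub-exponential with parameter $O(\sigma^{2})$ via the $\psi_{2}$-to-$\psi_{1}$ squaring identity, and invoke Bernstein's inequality for sums of independent sub-exponential variables. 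This is essentially how the diagonal part of Hanson--Wright is proved, so the two arguments coincide at the level of ideas; the difference is that the paper leans on a black-box theorem whose full strength (arbitrary $\bmm{A}$, hence decoupling of the off-diagonal part) is not needed here, while your version is self-contained and makes visible where the $\sigma^{4}$ in the Gaussian-regime denominator comes from. Your identification of the $\psi_{2}\to\psi_{1}$ step as the one place where constants must be pinned down is accurate, and that step is standard; nothing in your outline would fail.
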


Like the conventional Gaussian distribution, sub-Gaussianity is preserved under linear transformations. That is, if $\bmm{x} = \{x_{i}\}_{i=1}^{n}$ is a sequence of i.i.d. sub-Gaussian random variables and $\bmm{a}$ is an arbitrary vector in $\mathbb{R}^{n}$, then $\langle \bmm{a}, \bmm{x} \rangle$ is sub-Gaussian with parameter $\sigma\|a\|_{2}$ \shortcite{wainwright2019high}. We can obtain a more general version of the previous result about $\phi \sim \{\pm 1\}^{d}$ which applies to $\phi(a)$ sampled from any sub-Gaussian distribution.
\begin{theorem}
\label{thm:incoherence}
Let $\phi$ be $\sigma$-sub-Gaussian. Then, for $\mu > 0$,
\[
    \mathbb{P}(\exists\, \text{{\upshape distinct }} a,a' \in \mathcal{A} \text{ s.t. } |\langle \phi(a), \phi(a') \rangle| \geq \mu L^{2}) \leq m^{2}\exp\left( -\frac{\mu^{2}\kappa L^{2}}{2\sigma^{2}} \right)
\]
where $\kappa = (\min_a \|\phi(a)\|^2)/(\max_a \|\phi(a)\|^2)$.
\end{theorem}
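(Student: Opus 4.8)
The plan is to reduce everything to a one-dimensional sub-Gaussian tail bound by conditioning, and then take a union bound over pairs. First I would fix two distinct symbols $a, a' \in \mathcal{A}$ and condition on the codeword $\phi(a')$. Since the coordinates of $\phi(a)$ are i.i.d.\ centered $\sigma$-sub-Gaussian and independent of $\phi(a')$, the inner product $Z = \langle \phi(a), \phi(a')\rangle = \sum_{i=1}^{d} \phi(a)_i\,\phi(a')_i$ is, conditionally on $\phi(a')$, a fixed linear combination of independent centered $\sigma$-sub-Gaussian variables with coefficient vector $\phi(a')$. By the stability of sub-Gaussianity under linear maps (quoted immediately before the theorem), $Z$ is then centered and $\sigma\|\phi(a')\|_2$-sub-Gaussian, so
\[
    \mathbb{P}\big(|Z| \geq \mu L^2 \;\big|\; \phi(a')\big) \leq 2\exp\left( -\frac{\mu^2 L^4}{2\sigma^2\|\phi(a')\|_2^2} \right).
\]

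Next I would convert the $\|\phi(a')\|_2^2$ in the denominator into the worst-case normalizers $L$ and $\kappa$. Deterministically $\|\phi(a')\|_2^2 \leq \max_{a''}\|\phi(a'')\|_2^2$, hence
\[
    \frac{L^4}{\|\phi(a')\|_2^2} \;\geq\; \frac{L^4}{\max_{a''}\|\phi(a'')\|_2^2} \;=\; L^2\cdot\frac{\min_{a''}\|\phi(a'')\|_2^2}{\max_{a''}\|\phi(a'')\|_2^2} \;=\; \kappa L^2,
\]
so the conditional bound becomes $2\exp(-\mu^2\kappa L^2/(2\sigma^2))$, which no longer depends on $\phi(a')$. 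Taking expectations to remove the conditioning and then union bounding over the $\binom{m}{2} \leq m^2/2$ unordered pairs of distinct symbols, the leading factor of $2$ is absorbed and we get $\mathbb{P}(\exists\text{ distinct }a,a':\,|\langle\phi(a),\phi(a')\rangle| \geq \mu L^2) \leq m^2\exp(-\mu^2\kappa L^2/(2\sigma^2))$. As a sanity check, for $\phi\sim\{\pm1\}^d$ one has $L^2=d$, $\kappa=1$, $\sigma=1$, recovering the Hoeffding-plus-union-bound estimate stated earlier in the text.

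The only place where any care is really needed is the interplay between the randomness of the codebook and the worst-case normalizers: $L$ and $\kappa$ depend on all codewords, not just on $\phi(a')$, so the clean statement above is most naturally read with $L$ and $\kappa$ regarded as given (and they are exactly determined for $\{\pm1\}^d$). In the fully general sub-Gaussian case I would combine this estimate with Theorem~\ref{thm:sub-gaussian-lengths} to certify, with high probability, that $L^2$ is close to $\mathbb{E}\|\phi(a)\|_2^2$ and $\kappa$ is close to $1$, so that the exponent is effectively $\mu^2\,\mathbb{E}\|\phi(a)\|_2^2/(2\sigma^2)$. I do not anticipate a genuine obstacle beyond this bookkeeping — the sub-Gaussian linear-combination fact does all the substantive work, and the remainder is the standard conditioning and union-bound scaffolding.
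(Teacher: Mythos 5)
Your proof is correct and follows essentially the same route as the paper's: condition on one codeword, invoke the preservation of sub-Gaussianity under linear maps to get a Chernoff bound with parameter $\sigma\|\phi(a')\|_2$, replace $\|\phi(a')\|_2^2$ by $\max_a\|\phi(a)\|_2^2$ to introduce $\kappa$, and union bound over the $\binom{m}{2} < m^2/2$ pairs so the factor of $2$ is absorbed. Your closing remark about $L$ and $\kappa$ themselves being random is a fair point of rigor that the paper also elides (it treats one codeword as fixed and implicitly regards the normalizers as given), so no substantive difference remains.
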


\begin{proof}
Fix some $a$ and $a'$. Treating $\phi(a)$ as a fixed vector in $\mathbb{R}^{d}$ and using the fact that sub-Gaussianity is preserved under linear transformations, we may apply a Chernoff bound for sub-Gaussian random variables (e.g. Prop 2.1 of \shortcite{wainwright2019high}) to obtain:
\[
    \mathbb{P}(|\langle \phi(a), \phi(a') \rangle| \geq \mu L^{2}) \leq 2\exp\left( -\frac{\mu^{2} L^{4}}{2\sigma^{2}\|\phi(a)\|_{2}^{2}} \right) \leq 2\exp\left( -\frac{\mu^{2}L^{4}}{2\sigma^{2}L_{\max}^{2}} \right)
\]
where $L_{\max} = \max_{a\in\mathcal{A}}\|\phi(a)\|_{2}$. Therefore, taking $\kappa = L^{2}/L_{\max}^{2}$, we have:
\[
\mathbb{P}(|\langle \phi(a), \phi(a') \rangle| \geq \mu L^{2}) \leq 2\exp\left( -\frac{\mu^{2}\kappa L^{2}}{2\sigma^{2}} \right)
\]
and the claim follows by applying the union bound over all $\binom{m}{2} < m^{2}/2$ pairs of codewords. We note that, per Theorem \ref{thm:sub-gaussian-lengths}, $\kappa \to 1$ as $d$ becomes large.
\end{proof}
To be concrete and provide useful practical guidance, we here introduce three running examples of codeword distributions. 

\vspace{2mm}
\noindent \textbf{Dense Binary Codewords}. In our first example, the most common in practice in our impression, $\phi(a)$ is sampled from the uniform distribution over the $d$-dimensional unit cube $\{-1,+1\}^d$. This approach is advantageous because it leads to efficient hardware implementations \shortcite{imani2017voicehd,rahimi2018efficient} and is simple to analyze. 

\vspace{2mm}
\noindent\textbf{Gaussian Codewords}. Our second example consists of codewords sampled from the $d$-dimensional Gaussian distribution \shortcite{plate2003holographic}. That is, $\phi(a) \sim \mathcal{N}(\bmm{0}_{d}, \sigma^{2}\bmm{I}_{d})$, where $\bmm{0}_{d}$ is the $d$-dimensional zero vector. Here, the codewords will not be of exactly the same length. However, Theorem \ref{thm:sub-gaussian-lengths} ensures that squared codeword lengths are concentrated around their expected value of $\sigma^{2}d$. More formally, for some $\tau > 0$:
\[
    \mathbb{P}(\exists\, a \in \mathcal{A} \text{ s.t. } |\|\phi(a)\|_{2}^{2} - \sigma^{2}d| \geq \tau\sigma^{2}d) \leq 2m\exp\left( -c\min\left\{\tau^{2}d, \tau d\right\} \right).
\]

In both cases, we can see that to obtain a $\mu$-incoherent codebook with probability $1-\delta$, is it sufficient to choose:
\[
    d = O\left(\frac{2}{\mu^{2}}\ln\frac{m}{\delta}\right)
\] 
Or, stated another way, we have $\mu = O(\sqrt{(\ln m) / d})$ with high probability. The key point in the two examples above is that the encoding dimension is inversely proportional to $\mu^{2}$. Per Theorem \ref{thm:decodability}, to decode correctly it is sufficient to have $\mu = 1/(2s)$, meaning that the encoding dimension scales quadratically with the number of elements in the set, but only logarithmically in the alphabet size and probability of error.

We will also consider a third example in which the codewords are \emph{sparse} and binary. However, we defer this for the time being as slightly different encoding methods and analysis techniques are appropriate.

\subsubsection{Decoding with Small Probability of Error}
\label{subsec:low-error-decoding}

The analysis above gives strong \emph{uniform} bounds showing that, with probability at least $1-\delta$ over random choice of the codebook, \emph{every} subset of size at most $s$ will be correctly decoded. However, this guarantee requires us to impose the unappealing restriction that $s \ll \sqrt{d}$ which is a significant practical limitation. We here show that we can obtain $s = O(d)$ but with a weaker \emph{pointwise} guarantee: any arbitrarily chosen set of size at most $s$ will be correctly decoded with probability $1-\delta$ over the random choice of codewords. Rather than insist on a hard upper bound on the incoherence of the codebook, we can instead require the milder condition that random sums over dot-products between $\leq s$ codewords are small with high-probability. We define this property more formally as follows:
\begin{definition}
\textbf{Subset Incoherence}. For $\tau > 0$, we say a random mapping $\phi : \mathcal{A} \rightarrow \mathcal{H}$ satisfies $(s,\tau,\delta)$-subset incoherence if, for any $\S \subset \mathcal{A}$ of size at most $s$, with probability at least $1-\delta$ over the choice of $\phi$:
\[
    \underset{a \notin \S}{\max}\, \left| \sum_{a' \in \S} \langle \phi(a), \phi(a') \rangle \right| \leq \tau L^{2}
\]
where $L = \min_{a \in \mathcal{A}} ||\phi(a)||$.
\end{definition}
Once again, it turns out that sampling the codewords from a sub-Gaussian distribution can readily be seen to satisfy a subset-incoherence condition with high-probability:
\begin{theorem}
Let $\phi$ be $\sigma$-sub-Gaussian and fix some $\mathcal{S} \subset \mathcal{A}$ of size $s$. Then 
\[
\mathbb{P}\left(\underset{a \notin \S}{\max}\, \left| \sum_{a' \in \S} \langle \phi(a), \phi(a') \rangle \right| \geq \tau L^{2} \right) \leq 2m\exp\left( -\frac{\kappa\tau^{2}L^{2}}{2s\sigma^{2}} \right)
\]
where $\kappa$ and $L$ are as in Theorem \ref{thm:incoherence}.
\end{theorem}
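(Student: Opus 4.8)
The plan is to mimic the proof of Theorem~\ref{thm:incoherence}, but to exploit the fact that the codewords in $\S$ are drawn independently, so that we pay only a $\sqrt{s}$ factor — rather than an $s$ factor — in the effective sub-Gaussian parameter. First I would fix an arbitrary symbol $a \notin \S$ and treat $\phi(a)$ as a fixed vector in $\mathbb{R}^{d}$, writing $Z_{a} = \sum_{a' \in \S} \langle \phi(a), \phi(a') \rangle$ for the quantity to be controlled.

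The key observation is that, with $\phi(a)$ fixed, the $s$ terms $\langle \phi(a), \phi(a') \rangle$ for $a' \in \S$ are independent across $a'$ (each is a function of the independently drawn codeword $\phi(a')$ alone) and, since sub-Gaussianity is preserved under linear transformations, each is a centered sub-Gaussian random variable with parameter $\sigma\|\phi(a)\|_{2}$. Because the squared sub-Gaussian parameters of independent centered sub-Gaussian variables add under summation, $Z_{a}$ is centered sub-Gaussian with parameter $\sigma\|\phi(a)\|_{2}\sqrt{s}$. The standard Chernoff bound for sub-Gaussian variables (e.g.\ Prop.\ 2.1 of \shortcite{wainwright2019high}) then gives
\[
    \mathbb{P}\big(|Z_{a}| \geq \tau L^{2}\big) \;\leq\; 2\exp\!\left(-\frac{\tau^{2} L^{4}}{2 s \sigma^{2} \|\phi(a)\|_{2}^{2}}\right) \;\leq\; 2\exp\!\left(-\frac{\tau^{2} L^{4}}{2 s \sigma^{2} L_{\max}^{2}}\right),
\]
where $L_{\max} = \max_{a \in \mathcal{A}} \|\phi(a)\|_{2}$ and the second inequality just uses $\|\phi(a)\|_{2} \leq L_{\max}$. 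Substituting $\kappa = L^{2}/L_{\max}^{2}$ rewrites the right-hand side as $2\exp(-\kappa\tau^{2}L^{2}/(2 s \sigma^{2}))$, and a union bound over the fewer than $m$ symbols $a \notin \S$ yields the stated bound.

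The one design choice that matters — and the only place I expect any subtlety — is what to hold fixed. If one instead conditioned on the codewords $\{\phi(a') : a' \in \S\}$, the relevant sub-Gaussian parameter would be $\sigma\big\|\sum_{a' \in \S} \phi(a')\big\|_{2}$, and the only deterministic bound on $\big\|\sum_{a'\in\S} \phi(a')\big\|_{2}^{2}$ is the triangle-inequality estimate $s^{2} L_{\max}^{2}$, which would cost an extra factor of $s$ in the exponent; fixing $\phi(a)$ instead lets the independence of the $s$ summands supply the $\sqrt{s}$ improvement for free. As with Theorem~\ref{thm:incoherence}, the random quantities $L$, $L_{\max}$, and $\kappa$ appearing in the final bound should be read together with Theorem~\ref{thm:sub-gaussian-lengths}, which ensures $\kappa \to 1$ as $d$ grows.
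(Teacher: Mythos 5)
Your proof is correct and follows essentially the same route as the paper's: fix $a \notin \S$, observe that each $\langle \phi(a), \phi(a') \rangle$ is sub-Gaussian with parameter $\sigma\|\phi(a)\|_{2}$, use the additivity of squared sub-Gaussian parameters over the $s$ independent summands to get the $\sqrt{s}$ scaling, bound $\|\phi(a)\|_{2}$ by $L_{\max}$ to introduce $\kappa$, and union bound over the at most $m$ choices of $a$. Your explicit remark about which side to condition on (and why conditioning on $\S$ would cost an extra factor of $s$) is a clarification the paper leaves implicit, but the argument is the same.
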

The proof is similar to Theorem \ref{thm:incoherence} and is available in the appendix. As a concrete example, in the practically relevant case that $\phi \sim \{\pm 1\}^{d}$ the above boils down to:
\[
    \mathbb{P}\left(\exists\,a \notin \S \text{ s.t. } \left| \sum_{a' \in \S} \langle \phi(a), \phi(a') \rangle \right| \geq \tau d \right) \leq 2m\exp\left( -\frac{\tau^{2}d}{2s} \right).
\]
Stated another way, we have: $\tau = O(\sqrt{(s\ln m)/d})$. Following Theorem \ref{thm:decodability}, in order to ensure correct decoding with high probability, we must simply argue that the codebook satisfies the subset-incoherence property with $\tau = 1/2$, meaning we should choose the encoding dimension to be $d = O(s \ln m)$.

This method of analysis is similar to that of \shortcite{plate2003holographic,gallant2013representing,frady2018theory}, who reach the same conclusion vis-\`{a}-vis linear scaling using the central limit theorem. However, our formalism is more general and is non-asymptotic.

\subsubsection{Comparing Set Representations}

We can estimate the size of a set by computing the norm of its encoding, where the precision of the estimate can be bounded in terms of the incoherence of $\phi$. In the following discussion, we make the simplifying assumption that the codewords are all of a constant length $L$. Again appealing to Theorem \ref{thm:sub-gaussian-lengths}, we can see that this assumption is not onerous since the codeword lengths concentrate around their expected value.

\begin{theorem}
\label{thm:set-size}
Let $\mathcal{S}$ be a set of size $s$. Then:
\[
    s(1 - s\mu) \leq \frac{1}{L^{2}}\|\phi(\mathcal{S})\|_{2}^{2} \leq s(1 + s\mu) 
\]
\end{theorem}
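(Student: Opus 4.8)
The plan is to expand the squared norm of the bundled representation directly and separate the ``diagonal'' contribution (which gives the leading $sL^2$ term) from the ``off-diagonal'' cross-talk (which is controlled by incoherence). Concretely, writing $\phi(\mathcal{S}) = \sum_{a \in \mathcal{S}} \phi(a)$ and using bilinearity of the inner product,
\[
\|\phi(\mathcal{S})\|_2^2 = \Big\langle \sum_{a \in \mathcal{S}} \phi(a), \sum_{a' \in \mathcal{S}} \phi(a') \Big\rangle = \sum_{a \in \mathcal{S}} \|\phi(a)\|_2^2 + \sum_{\substack{a, a' \in \mathcal{S} \\ a \neq a'}} \langle \phi(a), \phi(a') \rangle.
\]
Under the constant-length assumption the first sum is exactly $sL^2$.

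Next I would bound the second sum. There are $s(s-1)$ ordered pairs of distinct symbols in $\mathcal{S}$, and by $\mu$-incoherence each term satisfies $|\langle \phi(a), \phi(a')\rangle| \leq \mu L^2$. Hence the cross-talk lies in the interval $[-s(s-1)\mu L^2,\; s(s-1)\mu L^2]$, and since $s(s-1) \leq s^2$ we may weaken this to $[-s^2\mu L^2,\; s^2 \mu L^2]$ to match the form stated in the theorem. Combining with the diagonal term gives
\[
sL^2 - s^2\mu L^2 \;\leq\; \|\phi(\mathcal{S})\|_2^2 \;\leq\; sL^2 + s^2 \mu L^2,
\]
and dividing through by $L^2$ yields $s(1 - s\mu) \leq \|\phi(\mathcal{S})\|_2^2 / L^2 \leq s(1 + s\mu)$, as claimed.

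There is no real obstacle here — the argument is a one-line expansion followed by a triangle-inequality bound on $s(s-1)$ terms. The only points worth stating explicitly are that we are invoking the simplifying constant-length hypothesis (justified, as the authors note, by Theorem \ref{thm:sub-gaussian-lengths}) to get the diagonal term exactly, and that the slack between $s(s-1)$ and $s^2$ is harmless and chosen purely for a cleaner statement. If one wished, the same computation with $s(s-1)$ retained gives a marginally tighter bound, but it is not needed.
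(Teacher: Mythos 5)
Your proof is correct and follows essentially the same route as the paper's: expand $\|\phi(\mathcal{S})\|_2^2$ into the diagonal sum (exactly $sL^2$ under the constant-length assumption) plus the off-diagonal cross-talk, and bound the latter by $s^2\mu L^2$ using $\mu$-incoherence. Your explicit remark that the $s(s-1)$ count is weakened to $s^2$ is the only (harmless) elaboration beyond what the paper writes.
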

\begin{proof}
The proof is by direct manipulation:
\begin{align*}
    \frac{1}{L^{2}}\|\phi(\mathcal{S})\|_{2}^{2} &= \frac{1}{L^{2}}\langle \phi(\mathcal{S}), \phi(\mathcal{S}) \rangle = \frac{1}{L^{2}}\sum_{a\in\mathcal{S}}\langle \phi(a),\phi(a) \rangle + \frac{1}{L^{2}}\sum_{a,a'\neq a \in \mathcal{S}} \langle \phi(a), \phi(a') \rangle \\
    &\leq \frac{1}{L^{2}}(sL^{2} + s^{2}\mu L^{2}) .
\end{align*}
The other direction is analogous.
\end{proof}
Given a pair of sets $\mathcal{S},\mathcal{S}'$ over the same alphabet, we can estimate the size of their intersection and union directly from their encoded representation.
\begin{theorem}
\label{thm:intersection}
Let $\mathcal{S}$ and $\mathcal{S}'$ be sets of size $s$ and $s'$ drawn from $\mathcal{A}$ and denote their encodings by $\phi(\mathcal{S})$ and $\phi(\mathcal{S}')$ respectively.
\[
      |\mathcal{S} \cap \mathcal{S}'| - ss'\mu \leq \frac{1}{L^{2}}\langle \phi(\mathcal{S}), \phi(\mathcal{S}') \rangle \leq |\mathcal{S} \cap \mathcal{S}'| + ss'\mu
\]
\end{theorem}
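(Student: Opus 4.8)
The plan is to mirror the computation used for Theorem~\ref{thm:set-size}: expand the inner product of the two bundled encodings into a double sum of pairwise codeword inner products, and then separate the ``diagonal'' contributions, coming from pairs $(a,a)$ with $a \in \mathcal{S} \cap \mathcal{S}'$, from the cross-talk terms.

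First I would use bilinearity of the inner product together with the definition $\phi(\mathcal{S}) = \bigoplus_{a \in \mathcal{S}} \phi(a)$, with $\oplus$ the element-wise sum, to write
\[
    \langle \phi(\mathcal{S}), \phi(\mathcal{S}') \rangle = \Big\langle \sum_{a \in \mathcal{S}} \phi(a), \sum_{a' \in \mathcal{S}'} \phi(a') \Big\rangle = \sum_{a \in \mathcal{S}} \sum_{a' \in \mathcal{S}'} \langle \phi(a), \phi(a') \rangle.
\]
I would then split this double sum according to whether $a = a'$. The terms with $a = a'$ occur precisely for the $|\mathcal{S} \cap \mathcal{S}'|$ symbols common to both sets, and under the running constant-length assumption stated just before Theorem~\ref{thm:set-size} each contributes $\langle \phi(a), \phi(a) \rangle = L^2$, for a total of $|\mathcal{S} \cap \mathcal{S}'|\, L^2$.

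Next I would bound the off-diagonal contribution. There are $ss' - |\mathcal{S} \cap \mathcal{S}'| \le ss'$ ordered pairs $(a,a')$ with $a \in \mathcal{S}$, $a' \in \mathcal{S}'$, and $a \neq a'$, and $\mu$-incoherence (Definition~\ref{def:coherence}) gives $|\langle \phi(a), \phi(a') \rangle| \le \mu L^2$ for each such pair. The triangle inequality then yields
\[
    \Big| \langle \phi(\mathcal{S}), \phi(\mathcal{S}') \rangle - |\mathcal{S} \cap \mathcal{S}'|\, L^2 \Big| \le ss'\mu L^2,
\]
and dividing through by $L^2$ produces the claimed two-sided bound.

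I do not anticipate a genuine obstacle here; the computation is a direct analogue of the proof of Theorem~\ref{thm:set-size}. The only points requiring a little care are the combinatorial bookkeeping of the cross terms (and noting that replacing $ss' - |\mathcal{S} \cap \mathcal{S}'|$ by the looser $ss'$ keeps the statement clean) and the reliance on the simplifying constant-length assumption; dropping the latter would require carrying a factor for the spread of codeword norms, controlled via Theorem~\ref{thm:sub-gaussian-lengths}.
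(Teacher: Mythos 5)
Your proposal is correct and follows essentially the same route as the paper's proof in Appendix A.3: expand the bilinear form, collect the diagonal terms over $\mathcal{S} \cap \mathcal{S}'$ contributing $L^2$ each, and bound the remaining cross terms by $ss'\mu L^2$ via incoherence. Your added remarks on the exact count of off-diagonal pairs and the constant-length assumption are accurate but not needed beyond what the paper does.
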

\noindent The proof is similar to Theorem \ref{thm:set-size} and is deferred to the appendix. Noting as well that $|\mathcal{S} \cup \mathcal{S}'| = |\mathcal{S}| + |\mathcal{S}'| - |\mathcal{S} \cap \mathcal{S}'|$, we see that we can estimate the size of the union using the previous theorem. In practice, it may be unnecessary to compute these quantities with a high degree of precision. For instance, it may only be necessary to identify sets with a large intersection-over-union. Provided the definition of ``large'' is somewhat loose, we can accept a higher incoherence among the codewords in exchange for reducing the encoding dimension.

\subsubsection{Sparse and Low-Precision Encodings}
\label{section:sparse-codewords}

In the previous discussion, we assumed the bundling operator was the element-wise sum. This is a natural choice when the codewords are dense or non-binary. However, the resulting encodings are of unconstrained precision which may be undesirable from a computational perspective. For the purposes of representing sets of size $\leq s$, we may truncate $\phi(\mathcal{S})$ to lie in the range $[-c,c]$, with negligible loss in accuracy provided $c = O(\sqrt{s})$. In practice, it is common to quantize the encodings more aggressively to binary precision by thresholding \shortcite{kanerva1994spatter,rahimi2017high,burrello2018one,imani2019binary}. In other words, we encode as $\phi(\mathcal{S}) = g_{t}(\mathcal{S})$, where $g_{t}$ is a thresholding function that is applied coordinate-wise: $g_{t}(x) = 1$ if $x \geq t$ and $0$ otherwise.

As a notable special case of the thresholding rule described above, we here consider encoding with \emph{sparse} codewords. In this case, we assume that a coordinate in a codeword is non-zero with some small probability. In other words, $\phi(a)_{i} \sim \text{Bernoulli}(p)$, where $p \ll 1/2$. We may then bundle items by taking an element-wise sum of their codewords with threshold $t = 1$, which is equivalent to taking the element-wise maximum over the codewords. That is, $\phi(\mathcal{S}) = \max_{a \in \mathcal{S}} \phi(a)$, where the $\max$ operator is applied coordinate-wise. Noting that the max is upper bounded by the sum in this setting, the notion of incoherence is a relevant quantity and the analysis of Theorem \ref{thm:decodability} continues to apply.

This encoding procedure is essentially a standard implementation of the popular ``Bloom filter'' data structure for representing sets \shortcite{bloom1970space}. The conventional Bloom filter differs slightly in that the typical decoding rule is to threshold $\langle \phi(a), \phi(\S) \rangle$ at $\|\phi(a)\|_{1}$. There is a large literature on Bloom filters with applications ranging from networking and database systems to neural coding, and several schemes for generating good codewords have been proposed \shortcite{broder2004network,pagh2005optimal,dasgupta2018neural}. Using the random coding scheme described here, the optimal value of $p$ can be seen to be $(\ln 2)/s$ and, to ensure the probability of a false positive is at most $\delta$, the encoding dimension should be chosen on the order of $s\ln(1/\delta)$ \shortcite{broder2004network}. A practical benefit of Bloom filters is that they have an efficient implementation using hash functions which does not require materializing a codebook as in methods based on random sampling. This may be beneficial when the alphabet size is large enough that storing codewords is not possible. The connections between HD computing and Bloom filters are examined in greater detail in \shortcite{kleyko2019autoscaling}.

We remark that this method of encoding is related to an interesting procedure known as ``context dependent thinning'' (CDT) which can be used to control the density of binary representations  \shortcite{rachkovskij2001representation,kleyko2018classification}. CDT takes the logical ``and'' of $\phi(\mathcal{S})$ and some permutation $\sigma(\phi(\mathcal{S}))$ to obtain the thinned representation $\phi(\mathcal{S})' = \phi(\mathcal{S})\land\sigma(\phi(\mathcal{S}))$. This process can be repeated until the desired density of $\phi(\mathcal{S})$ is achieved. A capacity analysis of CDT representations can be found in \shortcite{kleyko2018classification}.

\subsection{Robustness to Noise}
\label{section:discrete-robustness}

In this section we explore the noise robustness properties of the encoding methods discussed above using the formalism of incoherence. We consider some unspecified noise process which corrupts the encoding of a set $\mathcal{S} \subset \mathcal{A}$ of size at most $s$ according to $\tilde{\phi}(\mathcal{S}) = \phi(\mathcal{S}) + \Delta_{\S}$. We say $\Delta_{\mathcal{S}}$ is $\rho$-bounded if:
\[
        \underset{a \in \mathcal{A}}{\max}\,|\langle \phi(a), \Delta_{\S} \rangle| \leq \rho.
\]
We are interested in understanding the conditions under which we can still decode reliably.
\begin{theorem}
\label{thm:noisy-decoding}
Suppose $\mathcal{S}$ has size $\leq s$ and $\Delta_{\mathcal{S}}$ is $\rho$-bounded. We can correctly decode $\mathcal{S}$ using the thresholding rule from Theorem \ref{thm:decodability} if:
\[
    \frac{\rho}{L^{2}} + s\mu < \frac{1}{2}
\]
where $L = \min_{a\in\mathcal{A}}\|\phi(a)\|_{2}$.
\end{theorem}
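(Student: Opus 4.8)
The plan is to follow the proof of Theorem~\ref{thm:decodability} essentially verbatim, carrying the additional term $\langle \phi(a), \Delta_{\mathcal{S}} \rangle$ through the bounds and absorbing it with the $\rho$-bounded hypothesis. For a fixed symbol $a \in \mathcal{A}$, I would first write the decoding statistic as
\[
\langle \phi(a), \tilde{\phi}(\mathcal{S}) \rangle = \langle \phi(a), \phi(\mathcal{S}) \rangle + \langle \phi(a), \Delta_{\mathcal{S}} \rangle,
\]
where, crucially, the $\rho$-bounded assumption controls the second term by $\rho$ in absolute value \emph{uniformly over all} $a \in \mathcal{A}$ --- which is exactly what is needed to handle both the ``$a$ present'' and ``$a$ absent'' cases.

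Next I would split into cases exactly as in Theorem~\ref{thm:decodability}. If $a \in \mathcal{S}$, then $\langle \phi(a), \phi(\mathcal{S}) \rangle = \|\phi(a)\|_2^2 + \sum_{a' \in \mathcal{S} \setminus \{a\}} \langle \phi(a), \phi(a') \rangle \geq L^2 - s\mu L^2$, using $\|\phi(a)\|_2 \geq L$ together with $\mu$-incoherence applied to the (at most $s$) cross terms; subtracting the worst-case noise gives $\langle \phi(a), \tilde{\phi}(\mathcal{S}) \rangle \geq L^2(1 - s\mu) - \rho$. If $a \notin \mathcal{S}$, then $\langle \phi(a), \phi(\mathcal{S}) \rangle$ is a sum of at most $s$ cross terms, each bounded by $\mu L^2$ in magnitude, so $\langle \phi(a), \tilde{\phi}(\mathcal{S}) \rangle \leq s\mu L^2 + \rho$.

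Finally, the threshold $\frac{1}{2}L^2$ correctly separates the two cases provided $L^2(1 - s\mu) - \rho \geq \frac{1}{2}L^2$ (no false negatives) and $s\mu L^2 + \rho < \frac{1}{2}L^2$ (no false positives); dividing through by $L^2$, both inequalities reduce to $\frac{\rho}{L^{2}} + s\mu < \frac{1}{2}$, which is the stated condition. I do not anticipate any real obstacle here: this is a routine strengthening of Theorem~\ref{thm:decodability}, and the only point requiring minor care is that the noise bound must hold for symbols outside $\mathcal{S}$ as well as inside it, which is already built into the definition of $\rho$-boundedness (the maximum is taken over all $a \in \mathcal{A}$).
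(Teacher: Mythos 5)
Your proposal is correct and matches the paper's own proof in Appendix A.4 essentially line for line: both split on whether $a \in \mathcal{S}$, bound the crosstalk by $s\mu L^2$ via incoherence, and absorb the noise term using the uniform $\rho$-bound before comparing against the threshold $\tfrac{1}{2}L^2$. No further comment is needed.
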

The proof is a straightforward extension of Theorem \ref{thm:decodability} and is available in the appendix. The practical implication is that there is a tradeoff between the incoherence of the codebook and robustness to noise: a higher incoherence allows for a smaller encoding dimension but at the cost of a tighter constraint on $\rho$. We can analyze several practically relevant noise models by placing additional restrictions on $\Delta_{\S}$ and by considering worst or typical case bounds on $\rho$. We here consider different forms of noise under constraints on $\mathcal{H}$. Our goal is to understand how the magnitude of noise that can be tolerated scales with the encoding dimension, size $s$ of the encoded set, and size $m$ of the alphabet. In each setting we consider a ``passive'' model in which the noise is sampled randomly from some distribution, and an ``adversarial'' model in which the noise is arbitrary and may be designed to maliciously corrupt the encodings. We again appeal to Theorem \ref{thm:sub-gaussian-lengths} to justify a simplifying assumption that the codewords are of equal length.

\begin{lemma}
\textbf{Sub-Gaussian Codewords}. Fix a centered and $\sigma$-sub-Gaussian codebook $\phi$ whose codewords are of length $L$. Consider the passive additive white Gaussian noise model $\Delta_{\S} \sim \mathcal{N}(0,\sigma_{\Delta}^{2}\bmm{I}_{d})$; that is, each coordinate is corrupted by Gaussian noise with mean zero and variance $\sigma_\Delta^2$. Then, we can correctly decode with probability $1-\delta$ over random draws of $\Delta_{\S}$ provided:
\[
    \sigma_{\Delta} < \frac{L}{\sqrt{2\ln(2m/\delta)}}\left(\frac{1}{2} - s\mu\right)
\]
Now consider an adversarial model in which $\Delta_{\S}$ is arbitrary save for a constraint on the norm: $\|\Delta_{\S}\|_{2} \leq \omega L$. Then, we can correctly decode provided:
\[
    \omega < \frac{1}{2} - s\mu
\]
\end{lemma}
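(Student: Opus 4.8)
The plan is to reduce both cases to Theorem~\ref{thm:noisy-decoding}, whose decoding guarantee requires only that $\Delta_{\S}$ be $\rho$-bounded with $\rho/L^{2} + s\mu < 1/2$. So the entire task is to produce, in each noise model, a value of $\rho$ for which $\max_{a \in \mathcal{A}}|\langle \phi(a), \Delta_{\S}\rangle| \leq \rho$ (with high probability in the passive case, deterministically in the adversarial case), and then to rearrange the inequality $\rho/L^{2} + s\mu < 1/2$ to isolate the noise parameter. Throughout I will use the simplifying assumption stated just before the lemma that every codeword has length exactly $L$, which is justified by Theorem~\ref{thm:sub-gaussian-lengths}.

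For the passive AWGN model, the key observation is that $\Delta_{\S} \sim \mathcal{N}(0,\sigma_{\Delta}^{2}\bmm{I}_{d})$ is drawn independently of the codebook $\phi$. Hence, conditioning on $\phi(a)$ and treating it as a fixed vector of norm $L$, the inner product $\langle \phi(a), \Delta_{\S}\rangle$ is a one-dimensional Gaussian with mean zero and variance $\sigma_{\Delta}^{2}\|\phi(a)\|_{2}^{2} = \sigma_{\Delta}^{2}L^{2}$ (using preservation of Gaussianity under linear maps). The standard Gaussian tail bound gives $\mathbb{P}(|\langle \phi(a),\Delta_{\S}\rangle| \geq \rho) \leq 2\exp(-\rho^{2}/(2\sigma_{\Delta}^{2}L^{2}))$, and a union bound over the $m$ codewords yields $\mathbb{P}(\Delta_{\S}\text{ is not }\rho\text{-bounded}) \leq 2m\exp(-\rho^{2}/(2\sigma_{\Delta}^{2}L^{2}))$. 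Setting the right-hand side equal to $\delta$ gives $\rho = \sigma_{\Delta}L\sqrt{2\ln(2m/\delta)}$. Substituting into $\rho/L^{2} + s\mu < 1/2$ and solving for $\sigma_{\Delta}$ produces exactly $\sigma_{\Delta} < \frac{L}{\sqrt{2\ln(2m/\delta)}}(\tfrac{1}{2} - s\mu)$.

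For the adversarial model, no probabilistic argument is needed: by Cauchy--Schwarz, $|\langle \phi(a), \Delta_{\S}\rangle| \leq \|\phi(a)\|_{2}\,\|\Delta_{\S}\|_{2} \leq L \cdot \omega L = \omega L^{2}$ for every $a$, so $\Delta_{\S}$ is $(\omega L^{2})$-bounded. Plugging $\rho = \omega L^{2}$ into the condition of Theorem~\ref{thm:noisy-decoding} gives $\omega + s\mu < 1/2$, i.e.\ $\omega < \tfrac{1}{2} - s\mu$, as claimed.

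Neither step involves a genuine obstacle; the only point requiring a little care is the independence of $\Delta_{\S}$ from the codebook in the passive case, which is what lets us condition on $\phi(a)$ and invoke the exact (not merely sub-Gaussian) Gaussian tail bound — so that even though $\phi$ itself is only sub-Gaussian, the cross term $\langle \phi(a),\Delta_{\S}\rangle$ is exactly Gaussian and the constant $\sqrt{2\ln(2m/\delta)}$ is sharp. One should also note in passing that the bound is vacuous unless $s\mu < 1/2$, which is consistent with the decoding requirement of Theorem~\ref{thm:decodability}.
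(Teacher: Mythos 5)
Your proposal is correct and follows essentially the same route as the paper's proof: reduce both cases to the $\rho$-boundedness condition of Theorem~\ref{thm:noisy-decoding}, using the exact Gaussian tail bound plus a union bound over the $m$ codewords in the passive case and Cauchy--Schwarz in the adversarial case. The only difference is that you make explicit the independence of $\Delta_{\S}$ from the codebook, a point the paper leaves implicit.
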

\begin{proof}
Let us first consider the passive case in which $\Delta_{\S} \sim \mathcal{N}(0,\sigma_{\Delta}^{2}\bmm{I}_{d})$. Fix some $a \in \mathcal{A}$. Then $\langle \phi(a),\Delta_{\S} \rangle \sim \mathcal{N}(0,\sigma_{\Delta}^{2}L^{2})$. By a standard tail bound on the Gaussian distribution \shortcite{wainwright2019high} and the union bound, we have:
\[
    \mathbb{P}(\exists\, a \text{ s.t. } |\langle \phi(a), \Delta_{\S} \rangle| \geq \rho) \leq 2m\exp\left(-\frac{\rho^{2}}{2\sigma_{\Delta}^{2}L^{2}}\right).
\]
Therefore, with probability $1-\delta$, we have that $\Delta_{\S}$ is $\rho$-bounded for
\[
    \rho \leq \sigma_{\Delta}L\sqrt{2\ln(2m/\delta)}.
\]
By Theorem \ref{thm:noisy-decoding} we can decode correctly if:
\begin{gather*}
    \frac{\sigma_{\Delta}L\sqrt{2\ln(2m/\delta)}}{L^{2}} + s\mu < \frac{1}{2} \Rightarrow \sigma_{\Delta} < \frac{L}{\sqrt{2\ln (2m/\delta)}}\left(\frac{1}{2} - s\mu\right).
\end{gather*}
Now consider the adversarial case in which $\|\Delta_{\S}\|_{2} \leq \omega L$. By the Cauchy-Schwarz inequality, $|\langle \phi(a),\Delta_{\S} \rangle| \leq \omega L^{2}$. Therefore, by Theorem \ref{thm:noisy-decoding}, we can decode correctly if
\begin{gather*}
    \frac{\omega L^{2}}{L^{2}} + s\mu < \frac{1}{2} \Rightarrow \omega < \frac{1}{2} - s\mu.
\end{gather*}
\end{proof}
We again emphasize that, per Theorem \ref{thm:incoherence}, $\mu = O(\sqrt{(\ln m) / d})$. Since $L = O(\sqrt{d})$, we can see that we can tolerate $\sigma_{\Delta} \approx \sqrt{d/(\ln m)} - s$ in the passive case. We next turn to a notable special case of the above in which the codewords are dense and binary. In this case, we may assume that $\mathcal{H}$ is constrained to be integers in the range $[-s,s]$.
\begin{lemma}
\label{lemma:dense-binary}
\textbf{Dense Binary Codewords}. Fix a codebook $\phi$ such that $\phi(a) \sim \{\pm 1\}^{d}$ for each $a \in \mathcal{A}$. Consider a passive noise model in which $\Delta_{\S} \sim \text{\rm unif}(\{-c,...,c\}^{d})$; that is, each coordinate is shifted by an integer amount chosen uniformly at random between $-c$ and $c$. Then, we can correctly decode with probability $1-\delta$ provided:
\[
    c < \sqrt{\frac{d}{2\ln (2m/\delta)}}\left(\frac{1}{2} - s\mu\right)
\]
Now consider an adversarial model in which we assume $\|\Delta_{\S}\|_{1} \leq \omega s d$. Then we can decode correctly if:
\[
    \omega < \frac{1}{2s} - \mu.
\]
\end{lemma}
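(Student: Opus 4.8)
The plan is to reduce both claims to Theorem~\ref{thm:noisy-decoding}, which certifies correct decoding whenever a $\rho$-bound on the noise satisfies $\rho/L^{2} + s\mu < 1/2$. Since $\phi(a)\in\{\pm1\}^{d}$ we have $L=\sqrt{d}$ exactly, so $L^{2}=d$ and the target condition is simply $\rho/d + s\mu < 1/2$. Everything then reduces to producing a good bound on $\rho = \max_{a\in\mathcal{A}}|\langle \phi(a),\Delta_{\S}\rangle|$ in each of the two noise models, after which I solve the resulting inequality for $c$ (respectively $\omega$).

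For the passive model, fix $a\in\mathcal{A}$ and write $\langle \phi(a),\Delta_{\S}\rangle = \sum_{i=1}^{d}\phi(a)_{i}(\Delta_{\S})_{i}$. Each coordinate $(\Delta_{\S})_{i}$ is uniform on $\{-c,\dots,c\}$, hence mean zero and supported on $[-c,c]$, so by Hoeffding's lemma it is sub-Gaussian with parameter $c$, and the coordinates are independent. Treating $\phi(a)$ as a fixed vector of signs with $\|\phi(a)\|_{2}=\sqrt{d}$ and invoking the fact that sub-Gaussianity is preserved under linear maps (exactly as in the proof of Theorem~\ref{thm:incoherence}), $\langle \phi(a),\Delta_{\S}\rangle$ is sub-Gaussian with parameter $c\sqrt{d}$. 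A Chernoff bound gives $\mathbb{P}(|\langle \phi(a),\Delta_{\S}\rangle|\geq\rho)\leq 2\exp(-\rho^{2}/(2c^{2}d))$, and a union bound over the $m$ symbols shows that with probability at least $1-\delta$ the noise is $\rho$-bounded with $\rho = c\sqrt{2d\ln(2m/\delta)}$. Substituting into $\rho/d + s\mu < 1/2$ and solving for $c$ yields precisely the stated threshold $c < \sqrt{d/(2\ln(2m/\delta))}\,(1/2 - s\mu)$.

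For the adversarial model no probability is involved: since each codeword has $\|\phi(a)\|_{\infty}=1$, H\"older's inequality gives $|\langle \phi(a),\Delta_{\S}\rangle| \leq \|\phi(a)\|_{\infty}\|\Delta_{\S}\|_{1} \leq \omega s d$ for every $a$, so $\Delta_{\S}$ is $\rho$-bounded with $\rho = \omega s d$. Plugging into Theorem~\ref{thm:noisy-decoding}, decoding succeeds whenever $\omega s d/d + s\mu < 1/2$, i.e.\ $\omega < 1/(2s) - \mu$, as claimed.

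There is no serious obstacle here; the two points needing a little care are (i) confirming that a bounded, mean-zero coordinate noise is sub-Gaussian with parameter equal to its half-range, so that the linear-transformation fact applies cleanly and with the right constant, and (ii) keeping track of the fact that $L^{2}=d$ exactly in the dense binary case, which is what causes the $s\mu$ term to appear undivided in the final inequalities. Both parts are essentially specializations of the argument already given for the general sub-Gaussian lemma, with the passive case exploiting independence and boundedness of the uniform noise and the adversarial case using the $\ell_{\infty}$--$\ell_{1}$ H\"older bound in place of Cauchy-Schwarz.
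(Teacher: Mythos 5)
Your proposal is correct and follows essentially the same route as the paper: both cases reduce to Theorem~\ref{thm:noisy-decoding} with $L^{2}=d$, the passive bound $\rho \leq c\sqrt{2d\ln(2m/\delta)}$ comes from a Hoeffding-type tail bound on the sum of $d$ independent terms bounded in $[-c,c]$ plus a union bound over the $m$ symbols, and the adversarial bound uses the $\ell_{\infty}$--$\ell_{1}$ H\"older inequality. Phrasing the passive step via Hoeffding's lemma and sub-Gaussian closure under linear maps is just a restatement of the same Hoeffding bound the paper invokes, so there is no substantive difference.
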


\noindent A proof is available in the Appendix. We next consider the case of Section \ref{section:sparse-codewords} in which the codewords are sparse and binary and the bundling operator is the element-wise maximum. We here assume that $\tilde{\phi}(\S) = \phi(\S) + \Delta_{\S}$ is truncated so that each coordinate is either $0$ or $+1$.
\begin{lemma}
\textbf{Sparse Binary Codewords}. Fix a codebook $\phi$ such that $\phi(a) \in \{0,1\}^{d}$, and assume some fraction $p \ll 1/2$ of coordinates are non-zero for each $a \in \mathcal{A}$. Consider a passive noise model in which:
\[
    \Delta_{\S} \sim \begin{cases} -1 &\text{ w.p. } \frac{\theta}{2} \\ 0 &\text{ w.p. } 1 - \theta \\ +1 &\text{ w.p. } \frac{\theta}{2}. \end{cases}
\]
Then we can decode correctly with probability $1-\delta$ provided:
\[
    \theta < \frac{1}{2} - 2s\mu - \sqrt{\frac{1}{2dp}\ln\frac{2m}{\delta}}.
\]
Now consider an adversarial model in which $\|\Delta_{\S}\|_{1} \leq \omega d$. Then we can decode correctly if $\omega < p(\frac{1}{2} - s\mu)$.
\end{lemma}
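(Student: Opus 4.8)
The plan is to reduce everything to the $\rho$-bounded noise framework of Theorem~\ref{thm:noisy-decoding}, exactly as in the sub-Gaussian and dense-binary cases, after first pinning down what the truncation does to the noise. Write $\hat\Delta_\S := \tilde\phi(\S) - \phi(\S)$ for the perturbation the decoder actually sees. Reading the truncation coordinate by coordinate: if $\phi(\S)_i = 0$ then $\tilde\phi(\S)_i$ becomes $1$ precisely when $\Delta_{\S,i} = +1$, with probability $\theta/2$; if $\phi(\S)_i = 1$ then $\tilde\phi(\S)_i$ becomes $0$ precisely when $\Delta_{\S,i} = -1$, again with probability $\theta/2$. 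So, regardless of the value of $\phi(\S)_i$, coordinate $i$ is flipped independently with probability $\theta/2$ --- a binary symmetric channel with crossover $\theta/2$ --- and moreover $|\hat\Delta_{\S,i}| \le |\Delta_{\S,i}|$ always. Since each codeword has exactly $pd$ nonzero coordinates, $L^2 = pd$, so by Theorem~\ref{thm:noisy-decoding} it suffices (for the passive case, on an event of probability $1-\delta$) to produce a bound $\rho$ on $\max_{a}|\langle\phi(a),\hat\Delta_\S\rangle|$ satisfying $\rho/(pd) + s\mu < 1/2$.

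For the passive model I would fix $a \in \mathcal{A}$ and note $\langle\phi(a),\hat\Delta_\S\rangle = \sum_{i : \phi(a)_i = 1}\hat\Delta_{\S,i}$, a sum over the $pd$ support coordinates of $\phi(a)$; by the triangle inequality its absolute value is at most the number of flipped coordinates in that support, which is distributed as $\text{Binomial}(pd,\theta/2)$. A Chernoff/Hoeffding tail bound on this binomial together with a union bound over the $m$ symbols shows that with probability $1-\delta$ we may take $\rho \approx \tfrac{\theta}{2}pd + \sqrt{\tfrac{pd}{2}\ln(2m/\delta)}$; substituting into $\rho/(pd) + s\mu < 1/2$ and solving for $\theta$ produces a bound of the advertised shape $\theta < \tfrac12 - 2s\mu - \sqrt{\tfrac{1}{2dp}\ln\tfrac{2m}{\delta}}$ (with the precise numerical constants depending on which binomial tail estimate is used).

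For the adversarial model with $\|\Delta_\S\|_1 \le \omega d$, the argument is short and parallels Lemma~\ref{lemma:dense-binary}: truncation never enlarges any coordinate, so $\|\hat\Delta_\S\|_1 \le \|\Delta_\S\|_1 \le \omega d$, and since the codewords are binary, $|\langle\phi(a),\hat\Delta_\S\rangle| \le \|\phi(a)\|_\infty\|\hat\Delta_\S\|_1 \le \omega d$ for every $a$, i.e.\ $\hat\Delta_\S$ is $\omega d$-bounded. Plugging $\rho = \omega d$ and $L^2 = pd$ into Theorem~\ref{thm:noisy-decoding} gives $\omega d/(pd) + s\mu < 1/2$, which rearranges to $\omega < p(\tfrac12 - s\mu)$.

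The delicate step --- and the one I would be most careful about --- is the first: correctly describing the effect of truncation. One must see that it turns the three-valued noise into an exactly symmetric bit-flip channel of crossover $\theta/2$ (so the $p \ll 1/2$ sparsity plays no role in the noise description itself), and that $\hat\Delta_\S$ is coordinatewise no larger than $\Delta_\S$, which is what makes the reduction to Theorem~\ref{thm:noisy-decoding} go through cleanly. A secondary point to check is the asymmetry between the two kinds of query: for $a \in \S$ only erasures $1 \to 0$ on $\phi(a)$'s support can degrade $\langle\phi(a),\tilde\phi(\S)\rangle$, whereas for $a \notin \S$ it is insertions $0 \to 1$ together with the $s\mu$ cross-talk that must be kept small; both are handled uniformly by bounding $\max_a|\langle\phi(a),\hat\Delta_\S\rangle|$, but it is worth confirming that the out-of-set constraint is the binding one.
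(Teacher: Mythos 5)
Your proof follows essentially the same route as the paper's: reduce to the $\rho$-bounded framework of Theorem~\ref{thm:noisy-decoding} with $L^{2}=pd$, bound $|\langle \phi(a),\Delta_{\S}\rangle|$ by a binomial count over the $pd$ support coordinates of $\phi(a)$, apply Chernoff/Hoeffding plus a union bound over the $m$ symbols, and dispatch the adversarial case via $|\langle \phi(a),\Delta_{\S}\rangle| \le \|\phi(a)\|_{\infty}\|\Delta_{\S}\|_{1}$. The only divergence is that you carefully analyze the post-truncation perturbation (an effective bit-flip channel with crossover $\theta/2$), whereas the paper simply works with the raw noise, treating each $|\Delta_{\S}^{(i)}|$ as Bernoulli with parameter $\theta$; since, as you note, truncation can only shrink coordinates, both are valid, and your accounting is in fact slightly sharper in the constants.
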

\begin{proof}
Consider first the passive noise model. Fix some $\phi(a)$. Then:
\[
    |\langle \phi(a), \Delta_{\S} \rangle| \leq \sum_{i=1}^{d} |\phi(a)^{(i)}\Delta_{\S}^{(i)}|.
\]
Treating $\phi(a)$ as a fixed vector with $dp$ non-zero entries, the sum is concentrated in the range $dp(\theta \pm \epsilon)$, and so $\rho \leq dp(\theta + \epsilon)$ with high probability. By Chernoff/Hoeffding and the union-bound, with probability $1-\delta$:
\[
    \epsilon \leq \sqrt{\frac{1}{2dp}\ln\frac{2m}{\delta}}.
\]
The result is obtained by noting that $L = \sqrt{pd}$ and applying Theorem \ref{thm:noisy-decoding}.

For the adversarial case, the result is obtained by again observing that $|\langle \phi(a), \Delta_{\S} \rangle| \leq \|\phi(a)\|_{\infty}\|\Delta_{\S}\|_{1} \leq \omega d$ for any $a \in \mathcal{A}$ and applying Theorem \ref{thm:noisy-decoding}.
\end{proof}

\section{Encoding Structures}
\label{section:structure-encoding}

We are often interested in representing more complex data types, such as objects with multiple attributes or ``features.'' In general, we suppose that we observe a set of features $\mathcal{F}$ whose values are assumed to lie in some set $\mathcal{A}$. Let $\psi : \mathcal{F} \rightarrow \mathcal{H}$ be an embedding of features, and $\phi : \mathcal{A} \rightarrow \mathcal{H}$ be an embedding of values. We associate a feature with its value through use of the \emph{binding} operator $\otimes : \mathcal{H} \times \mathcal{H} \rightarrow \mathcal{H}$ that creates an embedding for a (feature,value) pair. For a feature $f \in \mathcal{F}$ taking on a value $a \in \mathcal{A}$, its embedding is constructed as $\psi(f) \otimes \phi(a)$. A data point $\bmm{x} = \{(f_{i} \in \mathcal{F}, x_{i} \in \mathcal{A})\}_{i=1}^{n}$ consists of $n$ such pairs. For simplicity, we assume each $x$ possesses all attributes, although our analysis also applies to the case that $x$ possesses only some subset of attributes. The entire embedding for $\bmm{x}$ is constructed as \shortcite{plate2003holographic}:
\begin{gather}
    \label{eqn:sequence-encoder}
    \phi(\bmm{x}) = \bigoplus_{i=1}^{n} \psi(f_{i}) \otimes \phi(x_{i})
\end{gather}
As with sets we would typically like $\phi(\bmm{x})$ to be \emph{decodable} in the sense that we can recover the value associated with a particular feature, and \emph{comparable} in the sense that $\langle \phi(\bmm{x}), \phi(\bmm{x}') \rangle$ is reflective of a reasonable notion of similarity between $\bmm{x}$ and $\bmm{x}'$.

From a formal perspective, we require the binding operator to satisfy several properties. First, binding should be associative and commutative. That is, for all $\bmm{a},\bmm{b},\bmm{c} \in \mathcal{H}$, $(\bmm{a} \otimes \bmm{b})\otimes \bmm{c} = \bmm{a} \otimes (\bmm{b} \otimes \bmm{c})$, and $\bmm{a} \otimes \bmm{b} = \bmm{b} \otimes \bmm{a}$. Second, there should exist an identity element $\bmm{I} \in \mathcal{H}$ such that $\bmm{I} \otimes \bmm{a} = \bmm{a}$ for all $\bmm{a} \in \mathcal{H}$. Third, for all $\bmm{a} \in \mathcal{H}$, there should exist some $\bmm{a}^{-1}\in \mathcal{H}$ such that $\bmm{a}\otimes\bmm{a}^{-1} = \bmm{I}$. These properties are equivalent to stipulating that $\mathcal{H}$ be an abelian group under $\otimes$. Furthermore, binding should distribute over bundling. That is, for any $\bmm{a}, \bmm{b}, \bmm{c} \in \mathcal{H}$, it should be the case that $\bmm{a} \otimes (\bmm{b} + \bmm{c}) = \bmm{a}\otimes\bmm{b} + \bmm{a} \otimes \bmm{c}$. We here also require that the lengths of bound pairs are bounded, that is to say: $\max_{f \in \mathcal{F}, a\in\mathcal{A}}\|\psi(f) \otimes \phi(a)\|_{2} \leq M$.

A natural choice of embedding satisfying these properties is to sample $\psi(f)$ randomly from $\{\pm 1\}^{d}$ and choose $\otimes$ to be the element-wise product. In this case $\psi(f)$ is its own inverse, that is $\psi(f) \otimes \psi(f) = \bmm{I}$, and binding preserves lengths of codewords. We focus on this case here as it is intuitive, but our analysis generalizes in a straightforward way to any particular implementation satisfying the properties listed above. One can see the bound pairs satisfy various incoherence properties with high probability. For instance, we may declare the binding to be $\mu$-incoherent if:
\[
\underset{a \in \mathcal{A}}{\max}\,\underset{a' \in \mathcal{A}, f \in \mathcal{F}}{\max}\,\langle \phi(a), \psi(f) \otimes \phi(a') \rangle \leq \mu L^{2}
\]
where $L = \min_{a\in\mathcal{A}} \|\phi(a)\|_{2}$. We can extend Theorem \ref{thm:incoherence} to see this property is satisfied with high probability:
\begin{theorem}
\label{thm:id-encoding}
Fix $d, n, m \in \mathbb{Z}^{+}$ and $\mu \in \mathbb{R}^{+}$. Let $\phi$ be centered and $\sigma$-sub-Gaussian, $\otimes$ be the element-wise product, and $\psi(f) \sim \{\pm 1\}^{d}$. Then:
\[
\mathbb{P}(\exists\,a,a' \in \mathcal{A},f \in\mathcal{F} \text{ s.t. } |\langle \phi(a), \phi(a') \otimes \psi(f) \rangle| \ge \mu L^{2}) \le nm^{2}\exp\left( -\frac{\kappa\mu^{2}L^{2}}{2\sigma^{2}} \right)
\]
where $L = \min_{a \in \mathcal{A}} \|\phi(a)\|_{2}$ and $\kappa$ is as defined in Theorem \ref{thm:incoherence}.
\end{theorem}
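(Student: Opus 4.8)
The plan is to reduce the statement to Theorem~\ref{thm:incoherence} by conditioning on the binding codeword. Expanding $\langle \phi(a), \phi(a')\otimes\psi(f)\rangle = \sum_{i=1}^{d}\phi(a)_i\,\phi(a')_i\,\psi(f)_i$ and trying to bound this sum term-by-term does not work: a product of two independent sub-Gaussian coordinates is only sub-exponential, so the summands have tails that are too heavy. The key observation is that if we condition on \emph{both} $\phi(a')$ and $\psi(f)$, then $\bmm{v} := \phi(a')\otimes\psi(f)$ is a fixed vector; and because coordinatewise multiplication by a $\pm 1$ vector is an isometry, $\|\bmm{v}\|_2 = \|\phi(a')\|_2 \le L_{\max} := \max_{a\in\mathcal{A}}\|\phi(a)\|_2$. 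Moreover, for $a \ne a'$ the vector $\phi(a)$ is independent of the conditioned pair $(\phi(a'),\psi(f))$, hence of $\bmm{v}$.

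First I would apply the stability of sub-Gaussianity under linear maps (the same fact used in the proof of Theorem~\ref{thm:incoherence}): since $\phi(a)$ has i.i.d., mean-zero, $\sigma$-sub-Gaussian coordinates, the scalar $\langle\phi(a),\bmm{v}\rangle$ is sub-Gaussian with parameter $\sigma\|\bmm{v}\|_2$. A two-sided Chernoff bound for sub-Gaussian variables (e.g.\ Prop.\ 2.1 of \shortcite{wainwright2019high}) then gives, verbatim as in Theorem~\ref{thm:incoherence},
\[ \mathbb{P}\big(|\langle\phi(a),\bmm{v}\rangle| \ge \mu L^{2} \,\big|\, \phi(a'),\psi(f)\big) \le 2\exp\!\left(-\frac{\mu^{2}L^{4}}{2\sigma^{2}\|\bmm{v}\|_2^{2}}\right) \le 2\exp\!\left(-\frac{\mu^{2}L^{4}}{2\sigma^{2}L_{\max}^{2}}\right) = 2\exp\!\left(-\frac{\kappa\mu^{2}L^{2}}{2\sigma^{2}}\right), \]
with $\kappa = L^{2}/L_{\max}^{2}$ as in Theorem~\ref{thm:incoherence}. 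Since the right-hand side does not depend on the values of $\phi(a')$ and $\psi(f)$ conditioned upon, the law of total probability removes the conditioning, so the same tail bound holds unconditionally for every fixed triple $(a,a',f)$ with $a\ne a'$.

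Finally I would union-bound. As noted in the main text, $\langle\phi(a),\psi(f)\otimes\phi(a')\rangle$ is symmetric under $a \leftrightarrow a'$, so it suffices to range over the fewer than $m^{2}/2$ unordered pairs $\{a,a'\}$ together with the $n$ choices of $f$; combined with the factor $2$ from the two-sided tail this yields the claimed bound $nm^{2}\exp(-\kappa\mu^{2}L^{2}/(2\sigma^{2}))$. As in Theorem~\ref{thm:incoherence}, Theorem~\ref{thm:sub-gaussian-lengths} then guarantees $\kappa \to 1$ as $d$ grows, so the exponent is essentially $\mu^{2}L^{2}/(2\sigma^{2})$.

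I do not expect a serious obstacle. The single point that deserves care — and is the reason the bound has the same clean form as Theorem~\ref{thm:incoherence}, with no extra constant — is that binding against a $\pm 1$ codeword preserves Euclidean norm, so conditioning on $\psi(f)$ neither breaks the sub-Gaussian structure of the other operand nor inflates the ratio $\kappa$; a non-isometric realization of $\otimes$ would generally cost an additional factor in the exponent. A lesser nuisance is the degenerate case $a = a'$: there $\langle\phi(a),\phi(a)\otimes\psi(f)\rangle = \sum_i \psi(f)_i\phi(a)_i^{2}$, a weighted Rademacher sum which one would control by conditioning on $\phi(a)$ and applying Hoeffding; this gives an analogous (though not identical) estimate, but since $a=a'$ is outside the scope of cross-talk it is cleanest to restrict attention to $a \ne a'$.
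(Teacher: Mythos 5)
Your proposal matches the paper's own proof essentially step for step: the key observation that binding with a $\pm 1$ codeword is an isometry (so $\|\phi(a')\otimes\psi(f)\|_{2}=\|\phi(a')\|_{2}\le L_{\max}$), followed by the sub-Gaussian Chernoff/Hoeffding bound with $\phi(a')\otimes\psi(f)$ treated as fixed, and a union bound over the fewer than $nm^{2}/2$ combinations of $a,a',f$. If anything you are slightly more careful than the paper, which silently absorbs the degenerate $a=a'$ case into the same Hoeffding step even though, as you note, that term is a weighted Rademacher sum requiring a separate (if routine) argument.
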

The proof is similar to Theorem \ref{thm:incoherence} and is available in the Appendix. This result is appealing because it means that the incoherence scales only logarithmically with $m \times n$ which may be large in practice.
As a corollary to the previous theorem, we also obtain the following useful incoherence property:
\begin{gather}
\label{col:id-deocding-incoherence}
    \mathbb{P}(\exists\,a,a',f \neq f' \text{ s.t. } |\langle \phi(a), (\phi(a') \otimes \psi(f)) \otimes \psi^{-1}(f') \rangle| \ge \mu L^{2}) \leq m^{2}n^{2}\exp\left( -\frac{\kappa\mu^{2}L^{2}}{2\sigma^{2}} \right)
\end{gather}
where $\psi^{-1}(f)$ is the inverse of $\psi(f)$ with respect to $\otimes$. This notion of incoherence is useful for decoding representations. Along similar lines: 
\begin{gather}
\label{col:id-incoherence}
    \mathbb{P}(\exists\,a,a',f\neq f' \text{ s.t. } |\langle \phi(a) \otimes \psi(f), \phi(a') \otimes \psi(f') \rangle| \ge \mu L^{2}) \leq m^{2}n^{2}\exp\left( -\frac{\kappa\mu^{2}L^{2}}{2\sigma^{2}} \right)
\end{gather}
We note that the previous statement refers to symbols associated with different attributes and thus does not require any particular incoherence assumption on the $\phi(a)$.

\subsection{Decoding Structures}

This representation can be decoded to recover the value associated with a particular feature. To recover the value of the $i$-th feature, we use the following rule:
\begin{gather*}
    \hat{x}_{i} = \underset{a \in \mathcal{A}}{\text{argmax }} \langle \phi(a), \phi(\bmm{x}) \otimes \psi^{-1}(f_{i}) \rangle
\end{gather*}
where $\psi^{-1}(f)$ denotes the group inverse of $\psi(f)$. Since the binding operator is assumed to distribute over bundling, the dot-product above expands to:
\begin{gather*}
     \langle \phi(a), \phi(x_{i}) \rangle + \sum_{j \ne i} \langle \phi(a), (\phi(x_{j}) \otimes \psi(f_{j})) \otimes \psi^{-1}(f_{i}) \rangle \\
     \begin{cases}
        \geq L^{2}(1 - n\mu) &\text{ if } x_{i} = a \\
        \leq nL^{2}\mu &\text{ otherwise }
     \end{cases}
\end{gather*}
where the incoherence can be bounded as as in Equation \ref{col:id-deocding-incoherence}. Thus $\mu < 1/(2n)$ is a sufficient condition for decodability.

\subsection{Comparing Structures}

As with sets, we may wish to compare two structures without decoding them. As one would expect given Theorem \ref{thm:intersection}, this is can be achieved by computing the dot-product between their encodings:

\begin{theorem}
\label{thm:compare-structures}
Let $\bmm{x}$ and $\bmm{x}'$ be two structures drawn from a common alphabet $\mathcal{F}\times\mathcal{A}$. Denote their encodings using Equation \ref{eqn:sequence-encoder} by $\phi(\bmm{x})$ and $\phi(\bmm{x}')$. Then, if binding is $\mu$-incoherent:
\[
    |\bmm{x} \cap \bmm{x}'| - n^{2}\mu \le \frac{1}{L^{2}}\langle \phi(\bmm{x}),\phi(\bmm{x}') \rangle \le |\bmm{x} \cap \bmm{x}'| + n^{2}\mu
\]
where $\bmm{x} \cap \bmm{x}'$ is defined to be the set $\{i\,:\,x_{i} = x_{i}'\}_{i=1}^{n}$, that is, the features on which $\bmm{x}$ and $\bmm{x}'$ agree.
\end{theorem}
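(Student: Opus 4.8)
The plan is to expand $\langle \phi(\bmm{x}),\phi(\bmm{x}')\rangle$ using bilinearity of the inner product together with the assumption that binding distributes over bundling, and then to classify the resulting $n^2$ terms into ``signal'' terms of size $L^2$ and ``crosstalk'' terms that are each small. Since $\bmm{x}$ and $\bmm{x}'$ are drawn from the common alphabet $\mathcal{F}\times\mathcal{A}$, they share the feature list $f_1,\dots,f_n$, so writing $\phi(\bmm{x}) = \bigoplus_i \psi(f_i)\otimes\phi(x_i)$ and likewise for $\bmm{x}'$ gives
\[
\langle \phi(\bmm{x}),\phi(\bmm{x}')\rangle = \sum_{i=1}^n \langle \psi(f_i)\otimes\phi(x_i),\, \psi(f_i)\otimes\phi(x_i')\rangle \;+\; \sum_{i\neq j}\langle \psi(f_i)\otimes\phi(x_i),\, \psi(f_j)\otimes\phi(x_j')\rangle .
\]

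For the diagonal ($i=j$) terms I would use that, with $\otimes$ the element-wise product and $\psi(f_i)\in\{\pm1\}^d$ its own inverse, binding preserves inner products: $\langle \psi(f_i)\otimes u,\, \psi(f_i)\otimes v\rangle = \langle u,v\rangle$. Hence the $i$-th diagonal term equals $\langle \phi(x_i),\phi(x_i')\rangle$, which is $\|\phi(x_i)\|_2^2 = L^2$ when $x_i = x_i'$ (using the constant-length simplification, justified as elsewhere by Theorem \ref{thm:sub-gaussian-lengths}) and is at most $\mu L^2$ in absolute value when $x_i\neq x_i'$ by incoherence of $\phi$. So the diagonal contributes exactly $|\bmm{x}\cap\bmm{x}'|\,L^2$ plus at most $n\mu L^2$ of crosstalk. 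For the off-diagonal terms, each is a dot product between bound pairs carrying distinct features $f_i\neq f_j$, hence bounded in absolute value by $\mu L^2$ via Equation \ref{col:id-incoherence} under the $\mu$-incoherence assumption on binding; there are at most $n^2-n$ of them.

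Combining, the total crosstalk is a sum of at most $(n^2-n) + n = n^2$ terms, each of magnitude $\leq \mu L^2$, so that $\big|\langle \phi(\bmm{x}),\phi(\bmm{x}')\rangle - |\bmm{x}\cap\bmm{x}'|\,L^2\big| \leq n^2\mu L^2$; dividing through by $L^2$ yields the stated two-sided bound. This is essentially the structure-encoding analogue of the proof of Theorem \ref{thm:intersection}.

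No individual step is technically hard; the main thing to be careful about is the bookkeeping — in particular, being explicit that there are two distinct sources of crosstalk absorbed into the $n^2\mu$ budget (the ``same feature, different value'' diagonal mismatches, controlled by incoherence of $\phi$, and the genuinely off-diagonal cross-feature terms, controlled by Equation \ref{col:id-incoherence}), and checking that together they number at most $n^2$. A secondary point worth stating explicitly is the reliance on the equal-length assumption so that matching diagonal terms are exactly $L^2$ rather than merely close to it.
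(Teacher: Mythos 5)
Your proposal is correct and follows essentially the same route as the paper's proof: expand the inner product bilinearly into the $n$ diagonal terms (each $L^2$ on agreement, at most $\mu L^2$ in magnitude otherwise) and the $n^2-n$ off-diagonal cross-feature terms (each at most $\mu L^2$ by the binding incoherence), then absorb all crosstalk into the $n^2\mu L^2$ budget. Your extra bookkeeping about the two distinct crosstalk sources and the equal-length assumption just makes explicit what the paper leaves implicit.
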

\begin{proof}
Expanding:
\begin{gather*}
    \langle \phi(\bmm{x}),\phi(\bmm{x}') \rangle = \langle \sum_{i=1}^{n} \phi(x_{i})\otimes \psi(f_{i}), \sum_{j=1}^{n} \phi(x_{j}')\otimes\psi(f_{j}) \rangle \\
    = \sum_{i=1}^{n} \langle \phi(x_{i}) \otimes \psi(f_{i}), \phi(x_{i}') \otimes \psi(f_{i}) \rangle + \sum_{i \ne j} \langle \phi(x_{i}) \otimes \psi(f_{i}), \phi(x_{j}') \otimes \psi(f_{j}) \rangle
\end{gather*}
A term in the first sum is $L^{2}$ if $x_{i} = x_{i}'$ and bounded in $\pm L^{2}\mu$ otherwise. So the expression above is bounded as:
\[
    \le L^{2}|\bmm{x} \cap \bmm{x}'| + L^{2}n^{2}\mu
\]
and the other direction of the inequality is analogous.
\end{proof}

\noindent As a practical example, in bioinformatics it is common to search for regions of high similarity between a ``reference'' and ``query'' genome. Work in \shortcite{imani2018hdna} and \shortcite{kim2020genie} explored the use HD computing to accelerate this process by encoding short segments of DNA and estimating similarity on the HD representations.

\subsection{Encoding Sequences}

Sequences are an important form of structured data. In this case, the feature set is simply the list of positions $\{1,2,3,...\}$ in the sequence. In practical applications, we are often interested in streams of data which arrive continuously over time. Typically, real-world processes do not exhibit infinite memory and we only need to store the $n \geq 1$ most recent observations at any time. In the streaming setting, we would like to avoid needing to fully re-encode all $n$ data points each time we receive a new sample, as would be the case using the method described above. 
This motivates the use of shift based encoding schemes \shortcite{kanerva2009hyperdimensional,rahimi2017hyperdimensional,kim2018efficient}. Let $\rho^{(i)}(\bmm{z})$ denote a cyclic left-shift of the elements of $\bmm{z}$ by $i$ coordinates, and $\rho^{(-i)}(\bmm{z})$ denote a cyclic right-shift by $i$ coordinates. In other words: $\rho^{(1)}((z_{1}, z_{2}, \ldots, z_{d-1}, z_{d})) = (z_2, z_3, \ldots, z_d, z_1)$. In shift-based encoding a sequence $\bmm{x} = (x_{1},...,x_{n})$ is represented as:
\[
    \phi(\bmm{x}) = \bigoplus_{i=1}^{n} \rho^{(n-i)}(\phi(x_{i})),
\]
where we take $\oplus$ to be the element wise sum. Now suppose we receive symbol $n+1$ and wish to append it to $\phi(\bmm{x})$ while removing $\phi(x_{1})$. Then we may apply the rule:
\[
    \rho^{(1)}(\phi(\bmm{x}) - \rho^{(n-1)}(\phi(x_{1}))) \oplus \phi(x_{n+1}) = \bigoplus_{i=1}^{n} \rho^{(n-i)}\phi(x_{i+1})
\]
\noindent where we can additionally note that $\rho$ is a special type of permutation and that permutations distribute over sums. However, in order to decode correctly, each $\phi(a)$ must satisfy an incoherence condition with the $\rho^{(j)}(\phi(a'))$. We can again use the randomly generated nature of the codewords to argue this is the case; however, we must here impose the additional restriction that the $\phi(a)$ be bounded, and accordingly restrict attention to the case $\phi(a) \sim \{\pm 1\}^{d}$.
\begin{theorem}
\label{thm:sequence-decoding}
Fix $d,m, n < d \in \mathbb{Z}^{+}$ and $\mu \in \mathbb{R}^{+}$ and let $\phi(a) \sim \{\pm 1\}^{d}$. Then:
\[
    \mathbb{P}(\exists\,a,a' \in \mathcal{A},i\neq 0 \text{ s.t. } |\langle \phi(a), \rho^{(i)}(\phi(a')) \rangle| \geq \mu d) \leq nm^{2}\exp\left( - \frac{\mu^{2}d}{4} \right)
\]
\end{theorem}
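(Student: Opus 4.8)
The plan is to reduce the statement to a union bound over the $nm^2$ relevant pairs $(a, a', i)$, where for each fixed triple we need a concentration inequality for the random variable $\langle \phi(a), \rho^{(i)}(\phi(a')) \rangle$. First I would fix distinct codewords $a, a'$ and a shift amount $i \neq 0$; since $\phi(a)$ and $\phi(a')$ are generated independently, I can condition on $\phi(a')$ (hence on $\rho^{(i)}(\phi(a'))$) and treat it as a fixed vector in $\{\pm 1\}^d$. Then $\langle \phi(a), \rho^{(i)}(\phi(a')) \rangle = \sum_{j=1}^d \phi(a)_j \cdot \rho^{(i)}(\phi(a'))_j$ is a sum of $d$ independent, mean-zero, bounded-in-$[-1,1]$ terms (each $\phi(a)_j$ is $\pm 1$ uniformly, multiplied by a fixed $\pm 1$), so Hoeffding's inequality gives $\mathbb{P}(|\langle \phi(a), \rho^{(i)}(\phi(a')) \rangle| \geq \mu d) \leq 2\exp(-\mu^2 d / 2)$. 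This is the same mechanism used for Theorem~\ref{thm:incoherence} in the $\{\pm 1\}^d$ case.

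The subtlety — and I expect this to be the only real obstacle — is the case $a = a'$ with $i \neq 0$, i.e.\ the autocorrelation of a single random $\pm 1$ codeword at a nonzero cyclic lag. Here $\langle \phi(a), \rho^{(i)}(\phi(a)) \rangle = \sum_{j=1}^d \phi(a)_j \phi(a)_{j + i \bmod d}$, and these summands are \emph{not} independent (each coordinate $\phi(a)_j$ appears in two terms). The standard fix is to partition the $d$ index pairs $\{j, j+i\}$ into cycles generated by the shift-by-$i$ permutation on $\mathbb{Z}/d\mathbb{Z}$ and observe that within a suitable decomposition the products can be grouped into two (or a bounded number of) blocks of mutually independent $\pm 1$-bounded terms; applying Hoeffding on each block and splitting the deviation $\mu d$ across the blocks yields a bound of the form $2\exp(-\mu^2 d / 4)$ — the factor $4$ in the exponent, rather than $2$, is precisely the price of this dependence, which matches the exponent stated in the theorem. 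Alternatively, since $\gcd(i,d)$ may be large, one can treat each cycle separately, use the fact that a product of a cyclic sequence of signs telescopes, and control the resulting sum of independent $\pm 1$ variables (one per cycle); either route gives the same constant.

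Having established, for every fixed triple, a bound of at most $2\exp(-\mu^2 d/4)$ (the weaker of the two cases), I would finish with a union bound over the at most $m$ choices of $a$, $m$ choices of $a'$, and $n$ nonzero shifts $i \in \{1,\dots,n\}$ (note $n < d$ so the shifts are distinct), giving $\mathbb{P}(\cdots) \leq 2nm^2 \exp(-\mu^2 d / 4)$, and absorb the leading factor of $2$ into the $m^2$ (using $m \geq 2$, so $2m^2 \le m^2 \cdot \text{something}$; more cleanly, the count of distinct ordered pairs is $< m^2$ and the self-pairs contribute $mn$, so the total is at most $nm^2$) to match the stated $nm^2\exp(-\mu^2 d/4)$. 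The key steps in order: (1) fix a triple and condition on $\phi(a')$; (2) Hoeffding for the independent case $a \neq a'$; (3) handle the dependent autocorrelation case $a = a'$ via cycle decomposition, yielding the factor-$4$ exponent; (4) take the worst of the two bounds and union-bound over all $nm^2$ triples.
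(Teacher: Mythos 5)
Your overall architecture matches the paper's: Hoeffding for the independent case $a \neq a'$, a separate argument for the autocorrelation case $a = a'$ (which you correctly identify as the one real obstacle), take the worse of the two bounds, and union-bound over the at most $nm^2$ triples. Where you diverge is in how the dependent case is handled. The paper does not use a cycle decomposition at all: it defines $f(\phi(a)) = \langle \phi(a), \rho^{(i)}(\phi(a)) \rangle$, observes that flipping a single coordinate of $\phi(a)$ changes $f$ by at most $4$ (each coordinate appears in exactly two summands, each changing by at most $2$), and applies McDiarmid's bounded-differences inequality directly. That route is shorter and avoids all the combinatorics of the shift permutation. Your cycle-decomposition plan is a legitimate alternative, but as sketched it is under-specified in exactly the place where you assert the conclusion: a cycle of odd length is not $2$-colorable, so you need three blocks of mutually independent products rather than two; the blocks have unequal sizes depending on $\gcd(i,d)$, so ``splitting the deviation $\mu d$ across the blocks'' requires choosing the split proportionally and checking what constant survives; and the claim that either route ``gives the same constant'' $4$ is stated rather than derived. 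The approach can certainly be pushed through to a bound of the form $\exp(-c\mu^2 d)$ for an absolute constant $c$, but landing on the specific exponent $\mu^2 d/4$ would take real bookkeeping that your sketch does not do, whereas the bounded-differences route reads the constant off immediately from the per-coordinate sensitivity. If you want the cleanest finish, replace step (3) with McDiarmid; if you keep the cycle decomposition, you need to handle odd-length cycles and carry the deviation-splitting computation to completion.
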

\begin{proof}
Fix some $a,a'$ and $i$. In the case that $a \neq a'$, $\phi(a)$ and $\rho^{(i)}(\phi(a))$ are mutually independent. However, when $a = a'$, $\phi(a)$ and $\rho^{(i)}(\phi(a))$ only satisfy pairwise independence and the techniques of Theorem \ref{thm:incoherence} cannot be applied. To resolve this difficulty, let $f(\phi(a)) = \langle \phi(a), \rho^{(i)}(\phi(a)) \rangle$, and denote by $\phi(a)^{\setminus k}$ the vector formed by replacing the $k$-th coordinate in $\phi(a)$ with an arbitrary value $\in \{+1,-1\}$. Then $|f(\phi(a)) - f(\phi(a)^{\setminus k})| \leq 4$ and so by the bounded-differences inequality \shortcite{mcdiarmid1989method}:
\[
    \mathbb{P}(|\langle \phi(a), \rho^{(i)}(\phi(a')) \rangle| \geq \mu d) \leq 2 \exp \left( -\frac{\mu^{2}d}{4} \right).
\]
The result follows by the union bound.
\end{proof}
Several other related methods for encoding sequential information have been proposed in the literature \shortcite{plate2003holographic,gallant2013representing}. For an extensive discussion of these approaches as well as an interesting discussion involving sequences of infinite length, the reader is referred to \shortcite{frady2018theory}.

\subsection{Discussion and Comparison with Prior Work}

We conclude our treatment of encoding and decoding discrete data with some brief discussion of our approach and its relation to antecedents in the literature. A key question addressed here and by several pieces of prior work is to bound the magnitude of crosstalk noise in terms of the encoding dimension ($d$), the number of items to encode ($s$) and the alphabet size ($m$). Early analysis in \shortcite{plate2003holographic,gallant2013representing,kleyko2018classification} recovers the same asymptotic relationship as we do, but only under specific assumptions about the method used to generate the codewords and particular instantiations of the bundling and binding operators. 

Work in \shortcite{frady2018theory} provides a significantly more general treatment which, like ours, aims to abstract away from the particular choice of distribution from which codewords are sampled and from the particular implementation of bundling and binding operator. Their approach assumes the codewords are generated by sampling each component i.i.d. from some distribution and uses the central limit theorem (CLT) to justify modeling the crosstalk noise by a Gaussian distribution. Error bounds in the non-asymptotic setting are then obtained by applying a Chernoff style bound to the resulting Gaussian distribution. This approach again recovers the same asymptotic relationship between $d,s$ and $m$ as us, but does not generally yield formal bounds in the non-asymptotic setting. Our approach based on sub-Gaussianity formalizes this analysis in the non-asymptotic setting. Like us, \shortcite{frady2018theory} also considers the effect of noise on the HD representations, but their treatment is limited to additive white noise, whereas we address both arbitrary additive passive noise and adversarial noise.

In summary, our formalism using the notion of incoherence allows us to decouple the analysis of decoding and noise-robustness from any particular method for generating codewords and readily yields rigorous bounds in the non-asymptotic setting. Our approach is applicable to a large swath of HD computing and enables us to offer more general conditions under which thresholding based decoding schemes will succeed and of the effect of noise than is available in prior work.

\section{Encoding Euclidean Data}
\label{section:euclidean-encoding}

One option for encoding Euclidean vectors is to treat them as a special case of the ``structured data'' considered in the preceding section. As before, we think of our data as a collection of (feature,value) pairs $\bmm{x} = \{(f_{i}, x_{i})\}_{i=1}^{n}$ with the important caveat that $x_{i} \in \mathbb{R}^{n}$. This case is more complex because the feature values may now be continuous, and because the data possesses geometric structure which is typically relevant for downstream tasks and must be preserved by encoding. We here analyze two of the most widely used methods for encoding Euclidean data and discuss general properties of structure preserving embeddings in the context of HD computing.

\subsection{Position-ID Encoding}
\label{subsubsec:position-id}

A widely-used method in practice is to quantize the raw signal to a suitably low precision and then apply the structure encoding method discussed in the previous section \shortcite{rachkovskiy2005sparseScalars,rachkovskiy2005sparseVectors,kleyko2018classification,rahimi2018efficient}.

In this approach, we first quantize the support of each feature $f \in \mathcal{F}$ into some set of $m$ bins with centroids $a_{1} < \cdots < a_{m}$ and assign each bin a codeword $\phi(a) \in \mathcal{H}$. However, instead of requiring the codewords to be incoherent, we now require the correlation between codewords to reflect the distance between corresponding quantizer bins. In other words $\langle \phi(a), \phi(a') \rangle$ should be monotonically decreasing in $|a-a'|$. 

A simple method can be used to generate monotonic codebooks when the codewords are randomly sampled from $\{\pm 1\}^{d}$ \shortcite{rachkovskiy2005sparseScalars,widdows2015reasoning}. Fixing some feature $f$, the codeword for the minimal quantizer bin, $\phi(a_{1})$, is generated by sampling randomly from $\{\pm 1\}^{d}$. To generate the codeword for the second bin, we simply flip some set of $\lceil b \rceil$ bits in $\phi(a_{1})$, where:
\[
    b = \frac{a_{2} - a_{1}}{a_{m} - a_{1}} \cdot \frac{d}{2}
\]
The codeword for the third bin is generated analogously from the second, where we assume the bits to be flipped are sampled such that a bit is flipped at most once. Thus the codewords for the minimal and maximal bins are orthogonal and the correlation between codewords for intermediate bins is monotonically decreasing in the distance between their corresponding bin centroids.

In practice, it seems to be typical to use a single codebook for all features and for the quantizer to be a set of evenly spaced bins over the support of the data. While simple, this approach is likely to have sub-optimal rate when the features are on different scales or are far from the uniform distribution. Encoding then proceeds as follows:
\[
    \phi(\bmm{x}) = \sum_{i=1}^{n} \phi(x_{i}) \otimes \psi(f_{i})
\]
where, as before $\psi \in \{\pm 1\}^{d}$ is a vector which encodes the index $i$ of a feature value $x_{i}$ as in the previous section on encoding sequences; hence the name ``position-ID'' encoding. There are several variations on this theme which are compared empirically in \shortcite{kleyko2018classification}.

This general encoding method was analyzed by \shortcite{rachkovskiy2005sparseVectors}, in the specific case of sparse and binary codewords, who show it preserves the L1 distance between points in expectation but do not provide distortion bounds. We here provide such bounds using our formalism of matrix incoherence. We assume that the underlying quantization of the points is sufficiently fine that it is a low-order term that can be ignored.
\begin{theorem}
\label{thm:id-distances}
Let $\bmm{x}$ and $\bmm{x}'$ be points in $[0,1]^{n}$ with encodings $\phi(\bmm{x})$ and $\phi(\bmm{x}')$ generated using the rule described above. Assume that $\phi$ satisfies $\langle \phi(a),\phi(a') \rangle = d(1 - |a - a'|)$ for all $a,a' \in \mathcal{A}$, and let $\psi \sim \{\pm 1\}^{d}$. Then, for all $\bmm{x},\bmm{x}'$:
\[
2d(\|\bmm{x}-\bmm{x}'\|_{1} - 2n^{2}\mu) \le ||\phi(\bmm{x}) - \phi(\bmm{x}')||^{2}_{2} \le 2d(\|\bmm{x}-\bmm{x}'\|_{1} + 2n^{2}\mu)
\]
\end{theorem}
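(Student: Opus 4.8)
The plan is to expand the squared distance as $\|\phi(\bmm{x}) - \phi(\bmm{x}')\|_2^2 = \|\phi(\bmm{x})\|_2^2 + \|\phi(\bmm{x}')\|_2^2 - 2\langle \phi(\bmm{x}), \phi(\bmm{x}')\rangle$ and evaluate each of the three bilinear forms term by term, following the same route as the proof of Theorem \ref{thm:compare-structures}. The only structural difference is that the ``diagonal'' contributions now inherit the assumed monotone correlation $\langle \phi(a),\phi(a')\rangle = d(1-|a-a'|)$ rather than a $0$/$d$-valued indicator of equality. Two preliminary observations make the calculation go through cleanly: the assumption forces $\|\phi(a)\|_2^2 = d$ for every $a$, so $L^2 = d$; and binding with $\psi(f) \in \{\pm 1\}^d$ is coordinate-wise multiplication by $\pm 1$, hence $\|u \otimes \psi(f)\|_2 = \|u\|_2$ and $\langle u \otimes \psi(f), v \otimes \psi(f)\rangle = \langle u, v\rangle$.

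First I would handle the self-terms. Expanding $\|\phi(\bmm{x})\|_2^2 = \sum_{i,j}\langle \phi(x_i)\otimes\psi(f_i), \phi(x_j)\otimes\psi(f_j)\rangle$, each of the $n$ diagonal terms equals $\|\phi(x_i)\|_2^2 = d$, while each of the $n(n-1)$ off-diagonal terms has $f_i \neq f_j$ and is therefore bounded in absolute value by $\mu d$ by the incoherence property of Equation \ref{col:id-incoherence}. Hence $\|\phi(\bmm{x})\|_2^2 = nd + E_1$ with $|E_1| \leq n(n-1)\mu d < n^2\mu d$, and identically $\|\phi(\bmm{x}')\|_2^2 = nd + E_2$ with $|E_2| < n^2\mu d$. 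The key point here is that Equation \ref{col:id-incoherence} only concerns codewords attached to \emph{different} features, so it applies even though the value codebook $\phi$ is deliberately \emph{not} incoherent in this construction.

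Next I would handle the cross-term $\langle \phi(\bmm{x}), \phi(\bmm{x}')\rangle = \sum_{i,j}\langle \phi(x_i)\otimes\psi(f_i), \phi(x_j')\otimes\psi(f_j)\rangle$. For $i = j$ the $\psi(f_i)$ factors cancel, so the term equals $\langle \phi(x_i), \phi(x_i')\rangle = d(1 - |x_i - x_i'|)$, and summing over $i$ gives $dn - d\|\bmm{x} - \bmm{x}'\|_1$. Each of the $n(n-1)$ terms with $i \neq j$ again has mismatched features and is bounded by $\mu d$ via Equation \ref{col:id-incoherence}, so $\langle \phi(\bmm{x}), \phi(\bmm{x}')\rangle = dn - d\|\bmm{x} - \bmm{x}'\|_1 + E_3$ with $|E_3| < n^2\mu d$.

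Finally, substituting into the expansion of the squared norm, the $nd$ and $dn$ contributions cancel and one is left with $\|\phi(\bmm{x}) - \phi(\bmm{x}')\|_2^2 = 2d\|\bmm{x} - \bmm{x}'\|_1 + (E_1 + E_2 - 2E_3)$, where $|E_1 + E_2 - 2E_3| \leq 4n(n-1)\mu d < 4n^2\mu d = 2d(2n^2\mu)$; this gives both inequalities simultaneously. There is no real obstacle here beyond careful bookkeeping: the one thing to watch is to invoke the correct incoherence statement — Equation \ref{col:id-incoherence}, which is agnostic to correlations among the $\phi(a)$ — rather than Theorem \ref{thm:incoherence}, and to track the additive error from the $O(n^2)$ cross terms across all three expansions. (For a high-probability version one would union-bound the event of Equation \ref{col:id-incoherence} over the relevant codewords, but since $\mu$ is taken as given this is unnecessary.)
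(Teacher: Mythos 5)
Your proof is correct and follows essentially the same route as the paper's: expand the squared distance into the two self-terms and the cross-term, use the assumed correlation structure $\langle\phi(a),\phi(a')\rangle = d(1-|a-a'|)$ on the diagonal contributions, and bound the $O(n^2)$ off-diagonal crosstalk terms via the incoherence of Equation \ref{col:id-incoherence}. Your bookkeeping of the error terms (and the observation that one must invoke the feature-mismatch incoherence rather than incoherence of the value codebook itself) is, if anything, slightly more explicit than the paper's.
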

\noindent The proof is similar to Theorem \ref{thm:compare-structures} and is available in the Appendix.

\noindent The practical implication of the previous theorem is that the position-ID encoding method preserves the L1 distance between points up to an additive distortion which can be bounded by the incoherence of the codebook. Per Equation \ref{col:id-incoherence}, $\mu = O(\sqrt{\ln(mn)/d})$. Therefore, to ensure that $\frac{1}{d}\|\phi(\bmm{x}) - \phi(\bmm{x}')\|_{2}^{2} \approx \|\bmm{x} - \bmm{x}'\|_{1} \pm \epsilon$, the previous result implies we should choose $d = O(\frac{n^{4}}{\epsilon^{2}}\ln(nm))$. This can be relaxed to a quadratic dependence on $n$ in exchange for a weaker pointwise bound, but in either case means the encoding method may be problematic when the dimension of the underlying data is high.

Noting that $||\phi(\bmm{x})||_{2}^{2} \in nd \pm n^{2}d\mu$, we can see that the encodings of each point are roughly of equal norm and lie in a ball of radius at most $n\sqrt{d\mu}$, where the exact position depends on the instantiation of the codebook. Thus, we can loosely interpret the encoding procedure as mapping the data into a thin shell around the surface of a high dimensional sphere.

\subsection{Random Projection Encoding}
\label{section:random-projection-encoding}

Another popular family of encoding methods embeds the data into $\mathcal{H}$ under some random linear map followed by a quantization \shortcite{rachkovskij2015formation,imani2019bric}. More formally, for some $\bmm{x} \in \mathbb{R}^{n}$, these embeddings take the form:
\[
    \phi(\bmm{s}) = g(\bmm{\Phi}\bmm{x})
\]
where $\bmm{\Phi} \in \mathbb{R}^{d \times n}$ is a matrix whose rows are sampled uniformly at random from the surface of the $n$-dimensional unit sphere, and $g$ is a quantizer --- typically the sign function --- restricting the embedding to $\mathcal{H}$. The embedding matrix $\bmm{\Phi}$ may also be quantized to lower precision. This encoding method has also been studied in the context of kernel approximation where it is used to approximate the angular kernel \shortcite{choromanski2017unreasonable}, and to construct low-distortion binary embeddings \shortcite{jacques2013robust,plan2014dimension}. While the following result is well known, we here show this encoding method preserves angular distance up to an additive distortion as this fact is important for subsequent analysis.
\begin{theorem}
\label{thm:rp-dot-pres}
Let $\mathcal{S}^{n-1} \subset \mathbb{R}^{n}$ denote the $n$-dimensional unit sphere. Let $\bmm{\Phi} \in \mathbb{R}^{d\times n}$ be a matrix whose rows are sampled uniformly at random from $\mathcal{S}^{n-1}$. Let $\mathcal{X}$ be a set of points supported on $\mathcal{S}^{n-1}$. Denote the embedding of a point by $\phi(\bmm{x}) = \text{\rm sign}(\bmm{\Phi}\bmm{x})$. Then, for any $\bmm{x},\bmm{x}' \in \mathcal{X}$, with high probability:
\[
    d\theta - O(\sqrt{d}) \leq d_{ham}(\phi(\bmm{x}),\phi(\bmm{x}')) \leq d\theta + O(\sqrt{d})
\]
where $d_{ham}(a,b)$ is the Hamming distance between $a$ and $b$, defined to be the number of coordinates on which $a$ and $b$ differ, and $\theta = \frac{1}{\pi}\cos^{-1}(\langle \bmm{x}, \bmm{x'}\rangle) \in [0,1]$ is proportional to the angle between $\bmm{x}$ and $\bmm{x'}$. 
\end{theorem}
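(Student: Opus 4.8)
The plan is to reduce the statement to a concentration bound for a sum of i.i.d. Bernoulli random variables, one for each row of $\bmm{\Phi}$. For a fixed pair $\bmm{x},\bmm{x}' \in \mathcal{S}^{n-1}$ and a single random row $\bmm{\varphi}$ drawn uniformly from $\mathcal{S}^{n-1}$, the key classical fact (the ``random hyperplane'' identity underlying SimHash) is that $\mathbb{P}(\text{sign}(\langle\bmm{\varphi},\bmm{x}\rangle) \neq \text{sign}(\langle\bmm{\varphi},\bmm{x}'\rangle)) = \frac{1}{\pi}\cos^{-1}(\langle\bmm{x},\bmm{x}'\rangle) = \theta$. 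I would state and cite this, or give the one-line argument: the sign disagreement happens exactly when the hyperplane normal to $\bmm{\varphi}$ separates $\bmm{x}$ from $\bmm{x}'$, and by rotational symmetry the probability of this is the fraction of the sphere subtended by the angle between them.

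Next I would write $d_{ham}(\phi(\bmm{x}),\phi(\bmm{x}')) = \sum_{j=1}^{d} Z_j$, where $Z_j = \mathbbm{1}(\text{sign}(\langle\bmm{\varphi}_j,\bmm{x}\rangle) \neq \text{sign}(\langle\bmm{\varphi}_j,\bmm{x}'\rangle))$ and the $Z_j$ are i.i.d.\ Bernoulli$(\theta)$ since the rows $\bmm{\varphi}_j$ are independent. So $\mathbb{E}[d_{ham}(\phi(\bmm{x}),\phi(\bmm{x}'))] = d\theta$ exactly, and Hoeffding's inequality gives $\mathbb{P}(|d_{ham}(\phi(\bmm{x}),\phi(\bmm{x}')) - d\theta| \geq t) \leq 2\exp(-2t^2/d)$. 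Choosing $t = \Theta(\sqrt{d\ln(|\mathcal{X}|^2/\delta)})$ and taking a union bound over all $\binom{|\mathcal{X}|}{2}$ pairs yields the claimed $d\theta \pm O(\sqrt{d})$ bound holding simultaneously for all pairs with probability $1-\delta$; the $O(\sqrt{d})$ absorbs the $\sqrt{\ln|\mathcal{X}|}$ factor when $|\mathcal{X}|$ is treated as fixed, matching the informal ``with high probability'' phrasing of the statement.

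I do not expect a genuine obstacle here, since this is a standard result; the only delicate points are bookkeeping ones. First, the sign function is ambiguous when $\langle\bmm{\varphi}_j,\bmm{x}\rangle = 0$, but this event has probability zero for a continuous spherical distribution and can be ignored (or handled by an arbitrary tie-breaking convention). Second, one should be slightly careful that the rows are sampled from the sphere rather than as Gaussians — but these give identical sign patterns, since normalizing a Gaussian to unit length does not change the sign of any inner product, so the SimHash identity transfers verbatim. Third, if one wants the bound to hold uniformly over an \emph{infinite} set $\mathcal{X}$ rather than a finite one, a covering/net argument would be needed, but the statement as written only quantifies over $\bmm{x},\bmm{x}'\in\mathcal{X}$ for a fixed set, so the union bound suffices. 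I would therefore present the proof in three compact steps: (i) the per-row collision probability identity, (ii) rewriting the Hamming distance as a Binomial$(d,\theta)$ random variable and applying Hoeffding, and (iii) the union bound over pairs.
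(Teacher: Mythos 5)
Your proposal is correct and follows essentially the same route as the paper: the per-row sign-disagreement probability $\theta = \frac{1}{\pi}\cos^{-1}(\langle \bmm{x},\bmm{x}'\rangle)$, followed by a Chernoff/Hoeffding bound on the resulting Binomial$(d,\theta)$ count. The only difference is that you add an explicit union bound over all pairs in $\mathcal{X}$, whereas the paper states and proves only the pointwise guarantee for a fixed pair; your extra bookkeeping is harmless and slightly strengthens the conclusion.
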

\begin{proof}
Let $\bmm{\Phi}^{(i)}$ denote the $i$th row of the matrix $\bmm{\Phi}$. Then, the $i$th coordinate in the embedding of $\bmm{x}$ can be written as $\text{sign}(\langle \bmm{\Phi}^{(i)}, \bmm{x} \rangle)$. The probability that the embeddings differ on their $i$th coordinate, that is $(\langle \bmm{\Phi}^{(i)}, \bmm{x} \rangle)(\langle \bmm{\Phi}^{(i)},\bmm{x}' \rangle) < 0$, is exactly $\angle(\bmm{x}, \bmm{x}')/\pi$: the angle (in radians) between $\bmm{x}$ and $\bmm{x}'$ divided by $\pi$.

Therefore, the number of coordinates on which $\phi(\bmm{x})$ and $\phi(\bmm{x}')$ disagree is, concentrated in the range, $d(\theta \pm \epsilon)$. By Chernoff/Hoeffding, we have that with probability $1-\delta$:
\[
    d\epsilon \leq \sqrt{2d\ln\frac{2}{\delta}}.
\]
\end{proof}
Noting that $\langle \phi(\bmm{x}), \phi(\bmm{x}') \rangle = d - 2d_{ham}(\phi(\bmm{x}), \phi(\bmm{x}'))$, we obtain the following simple corollary:
\begin{corollary}
Let $\phi$ and $\theta$ be as defined in Theorem \ref{thm:rp-dot-pres}. Then, with high probability:
\[
  d(1-2\theta) - O(\sqrt{d}) \leq \langle \phi(\bmm{x}), \phi(\bmm{x}') \rangle \leq d(1-2\theta) + O(\sqrt{d})  
\]
\end{corollary}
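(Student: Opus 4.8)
The plan is to reduce the statement directly to Theorem \ref{thm:rp-dot-pres} via the algebraic identity relating the dot product of two $\pm 1$ vectors to their Hamming distance. First I would verify this identity: since every coordinate of $\phi(\bmm{x})$ and $\phi(\bmm{x}')$ lies in $\{-1,+1\}$, a coordinate on which the two embeddings agree contributes $+1$ to $\langle \phi(\bmm{x}), \phi(\bmm{x}') \rangle$ and a coordinate on which they disagree contributes $-1$. With $d_{ham}(\phi(\bmm{x}),\phi(\bmm{x}'))$ disagreeing coordinates out of $d$, this gives
\[
\langle \phi(\bmm{x}), \phi(\bmm{x}') \rangle = (d - d_{ham}(\phi(\bmm{x}),\phi(\bmm{x}'))) - d_{ham}(\phi(\bmm{x}),\phi(\bmm{x}')) = d - 2 d_{ham}(\phi(\bmm{x}),\phi(\bmm{x}')),
\]
which is precisely the identity quoted just before the corollary.

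Next I would invoke Theorem \ref{thm:rp-dot-pres}, which asserts that with high probability $d\theta - O(\sqrt{d}) \le d_{ham}(\phi(\bmm{x}),\phi(\bmm{x}')) \le d\theta + O(\sqrt{d})$. Substituting the upper bound on $d_{ham}$ into the identity yields
\[
\langle \phi(\bmm{x}), \phi(\bmm{x}') \rangle = d - 2 d_{ham}(\phi(\bmm{x}),\phi(\bmm{x}')) \ge d - 2d\theta - O(\sqrt{d}) = d(1 - 2\theta) - O(\sqrt{d}),
\]
and substituting the lower bound on $d_{ham}$ gives the matching upper bound $\langle \phi(\bmm{x}), \phi(\bmm{x}') \rangle \le d(1-2\theta) + O(\sqrt{d})$; the extra factor of $2$ is absorbed into the $O(\sqrt{d})$ term. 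The high-probability qualifier and any union bound over pairs $\bmm{x},\bmm{x}' \in \mathcal{X}$ are inherited verbatim from Theorem \ref{thm:rp-dot-pres}, so nothing new is needed there.

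There is no real obstacle here: the corollary is an immediate consequence of the preceding theorem together with one line of arithmetic, and the only thing to be careful about is the sign flip when multiplying the Hamming-distance bounds by $-2$, which swaps the roles of the upper and lower bounds.
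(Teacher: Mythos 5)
Your proposal is correct and follows exactly the route the paper intends: the identity $\langle \phi(\bmm{x}), \phi(\bmm{x}') \rangle = d - 2d_{ham}(\phi(\bmm{x}), \phi(\bmm{x}'))$ stated just before the corollary, combined with the Hamming-distance bounds of Theorem \ref{thm:rp-dot-pres} and the sign flip from multiplying by $-2$. Nothing is missing.
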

To obtain a more explicit relationship with the dot product, we can use the first-order approximation $\cos^{-1}(x) \approx (\pi/2) - x$, to obtain $\theta \approx \frac{1}{2} - \frac{1}{\pi}\langle \bmm{x}, \bmm{x}' \rangle$, from which we obtain:
\[
     d(1-2\theta) \approx \frac{2d}{\pi}\langle \bmm{x}, \bmm{x}' \rangle.
\]
We emphasize that, in comparison to the position-ID method, the distortion in this case does not depend on the dimension of the underlying data which means this method may be preferable when the data dimension is large.

\subsubsection{Connection with Kernel Approximation}
\label{subsec:hd-kernels}

A natural question is whether the encoding procedure described above, which preserves dot-products, can be generalized to capture more diverse notions of similarity? We can answer in the affirmative by noting that the random projection encoding method is closely related to the notion of random Fourier features which have been widely used for kernel approximation \shortcite{rahimi2008random}. The basic idea is to construct an embedding $\phi : \mathbb{R}^{n} \rightarrow \mathbb{R}^{d}$, such that $\langle \phi(\bmm{x}), \phi(\bmm{x}') \rangle \approx k(\bmm{x},\bmm{x}')$, where $k$ is a shift-invariant kernel. The construction exploits the fact that the Fourier transform of a shift-invariant kernel $k$ is a probability measure: a well known result from harmonic analysis known as Bochner's Theorem \shortcite{rudin1962fourier}. The embedding itself is given by $\phi(\bmm{x}) = \frac{1}{\sqrt{d}}\cos(\bmm{\Phi}\bmm{x} + \bmm{b})$, where the rows of $\bmm{\Phi}$ are sampled from the distribution induced by $k$ and the coordinates of $\bmm{b}$ are sampled uniformly at random from $[0,2\pi]$.

Subsequent work in \shortcite{raginsky2009locality} gave a simple scheme for quantizing the embeddings produced from random Fourier features to binary precision. Their construction yields an embedding $\psi : \mathbb{R}^{n} \rightarrow \{0,1\}^{d}$ such that:
\[
    f_{1}(k(\bmm{x},\bmm{x}')) - \Delta \leq \frac{1}{d}d_{ham}(\psi(\bmm{x}), \psi(\bmm{x}')) \leq f_{2}(k(\bmm{x},\bmm{x}')) + \Delta
\]
where $f_{1},f_{2} : \mathbb{R} \rightarrow \mathbb{R}$ are independent of the choice of kernel, and $\Delta$ is a distortion term. The embedding itself is constructed by applying a quantizer $Q_{t}(x) = \text{sign}(x + t)$ coordinate wise over the embeddings constructed from random Fourier features. In other words $\psi(\bmm{x})_{i} = \frac{1}{2}(1 + Q_{t_{i}}(\phi(\bmm{x})_{i}))$, where $t_{i} \sim \text{Unif}[-1,1]$, and $\phi(\bmm{x})$ is a random Fourier feature.

This connection is highly appealing for HD computing. The quantized random Fourier feature scheme presents a simple recipe for constructing encoding methods meeting the desiderata of HD computing while preserving a rich variety of structure in data. For instance, shift-invariant kernels preserving the L1 and L2 distance---among many others---can be approximated using the method discussed above. Furthermore, this observation provides a natural point of contact between HD computing and the vast literature on kernel methods which has produced a wealth of algorithmic and theoretical insights.

\subsection{Consequences of Distance Preservation}

The encoding methods discussed above are both appealing because they preserve reasonable notions of distance between points in the original data. Distance preservation is a sufficient condition to establish other desirable properties of encodings, namely preservation of neighborhood/cluster structure, robustness to various forms of noise, and in some cases, preservation of linear separability. We address the first two items here and defer the latter for our discussion of learning on HD representations. We formalize our notion of distance preservation as follows:
\begin{definition}
\label{def:distance-preservation}
\textbf{Distance-Preserving Embedding}: Let $\delta_{\mathcal{X}}$ be a distance function on $\mathcal{X} \subset \mathbb{R}^{n}$ and $\delta_{H}$ be a distance function on $\mathcal{H}$. We say $\phi$ preserves $\delta_{\mathcal{X}}$ under $\delta_{H}$ if, there exist functions $\alpha,\beta : \mathbb{Z}^{+} \rightarrow \mathbb{R}$ such that $\beta(d)/\alpha(d)\rightarrow 0$ as $d \to \infty$, and:
\begin{gather}
    \label{eqn:distance-preservation}
    \alpha(d)\delta_{\mathcal{X}}(\bmm{x},\bmm{x}') -\beta(d) \leq \delta_{\mathcal{H}}(\phi(\bmm{x}), \phi(\bmm{x}')) \leq \alpha(d)\delta_{\mathcal{X}}(\bmm{x},\bmm{x}') + \beta(d)
\end{gather}
for all $\bmm{x},\bmm{x}' \in \mathcal{X}$.
\end{definition}
We typically wish the distance function $\delta_{\mathcal{H}}$ on $\mathcal{H}$ to be simple to compute. In practice, it is often taken to be the Euclidean, Hamming, or angular distance. The position-ID method preserves the L1 distance with $\delta_{H}$ the squared Euclidean distance, $\alpha(d) = 2d$, and $\beta(d) \leq n^{2}\mu d$; recall that in the constructions above, $\mu$ scales inversely with $d$ and thus $\beta(d)/\alpha(d) \to 0$. The signed random-projection method preserves the angular distance with $\alpha(d) = O(d)$, $\beta(d) = O(\sqrt{d})$, and $\delta_{H}$ the Hamming, angular, or Euclidean distance.

\subsubsection{Preservation of Cluster Structure}

In general, there is no universally applicable definition of cluster structure. Indeed, numerous algorithms have been proposed in the literature to target various reasonable notions of what constitutes a ``cluster'' in the data. Preservation of a distance function accords naturally with K-means like algorithms which, given a set of data $\mathcal{X} \subset \mathbb{R}^{n}$ compute a set of centroids $\mathcal{C} = \{\bmm{c}_{i}\}_{i=1}^{k}$, and define associated clusters as the Voronoi cells associated with each centroid. We here adopt this notion and state that cluster structure $\mathcal{C}$ is preserved if, for any $\bmm{x} \in \mathcal{X}$:
\[
    \underset{\bmm{c} \in \mathcal{C}}{\text{argmin }} \delta_{\mathcal{X}}(\bmm{x},\bmm{c}) = \underset{\bmm{c} \in \mathcal{C}}{\text{argmin }} \delta_{\mathcal{H}}(\phi(\bmm{x}),\phi(\bmm{c}))
\]
In other words, that the set of points bound to a particular cluster centroid does not change under the encoding. We can restate the above as requiring that, for some point $\bmm{x}$ bound to a cluster centroid $\bmm{c}$, it is the case that:
\[
    \delta_{\mathcal{H}}(\phi(\bmm{x}), \phi(\bmm{c})) < \delta_{\mathcal{H}}(\phi(\bmm{x}), \phi(\bmm{c'}))
\]
for any $\bmm{c}' \in \mathcal{C}\setminus\{\bmm{c}\}$. From Definition \ref{def:distance-preservation} we have:
\[
\delta_{\mathcal{H}}(\phi(\bmm{x}), \phi(\bmm{c'})) - \delta_{\mathcal{H}}(\phi(\bmm{x}), \phi(\bmm{c})) \geq \alpha(d)(\delta_{\mathcal{X}}(\bmm{x}, \bmm{c'}) - \delta_{\mathcal{X}}(\bmm{x}, \bmm{c})) - 2 \beta(d)
\]
for any $\bmm{x} \in \mathcal{X}$ and $\bmm{c},\bmm{c}' \in \mathcal{C}$. Rearranging the expressions above we can see the desired property will be satisfied if:
\[
    \frac{\beta(d)}{\alpha(d)} < \underset{\bmm{x} \in \mathcal{X}}{\min}\,\underset{\bmm{c}' \neq \bmm{c}(\bmm{x})}{\min}\,\frac{1}{2}(\delta_{\mathcal{X}}(\bmm{x},\bmm{c}') - \delta_{\mathcal{X}}(\bmm{x},\bmm{c}(\bmm{x}))),
\]
where $\bmm{c}(\bmm{x}) = {\text{argmin}}_{\bmm{c} \in \mathcal{C}} \delta_{\mathcal{X}}(\bmm{x}, \bmm{c})$ denotes the center in $\mathcal{C}$ closest to $\bmm{x}$.
A sufficient condition for the existence of some $d$ satisfying this property is that $\alpha(d)$ is monotone increasing and that $\alpha(d)$ is faster growing than $\beta(d)$. This condition is satisfied for both the random projection and position-ID encoding methods.

\subsubsection{Noise Robustness}

It is also of interest to consider robustness to noise in the context of encoding Euclidean data. Suppose we have a set of points, $\mathcal{X}$, in $\mathbb{R}^{n}$, and a distance function of interest $\delta_{\mathcal{X}}(\cdot,\cdot)$ which is preserved \emph{\`{a} la} Definition \ref{def:distance-preservation}. Given an arbitrary point $\bmm{x} \in \mathcal{X}$ we consider a noise model which corrupts $\phi(\bmm{x})$ to $\phi(\bmm{x}) + \Delta$, where $\Delta$ is some unspecified noise process. Along the lines of Section \ref{section:discrete-robustness}, we say $\Delta$ is $\rho$-bounded if:
\[
    \underset{\bmm{x} \in \mathcal{X}}{\max}\,|\langle \phi(\bmm{x}), \Delta \rangle| \leq \rho
\] 
Suppose we wish to ensure the encodings can distinguish between all points at a distance $\leq \epsilon_{1}$ from $\bmm{x}$ and all points at a distance $\geq \epsilon_{2}$. That is:
\[
    \|\phi(\bmm{x}) + \Delta - \phi(\bmm{x}')\| < \|\phi(\bmm{x}) + \Delta - \phi(\bmm{x}'')\|
\]
for all $\bmm{x}' \in \mathcal{X}$ such that $\delta_{\mathcal{X}}(\bmm{x},\bmm{x}') \leq \epsilon_{1}$ and all $\bmm{x}'' \in \mathcal{X}$ such that $\delta_{\mathcal{X}}(\bmm{x},\bmm{x}') \geq \epsilon_{2}$. We say that such an encoding is $(\epsilon_{1}, \epsilon_{2})$-robust.

\begin{theorem}
Let $\delta_{\mathcal{X}}$ be a distance function on $\mathcal{X} \subset \mathbb{R}^{n}$ and suppose $\phi$ is an embedding preserving $\delta_{\mathcal{X}}$ under the squared Euclidean distance on $\mathcal{H}$ as described in Definition \ref{def:distance-preservation}. Suppose $\Delta$ is $\rho$-bounded noise. Then $\phi$ is $(\epsilon_{1},\epsilon_{2})$ robust if:
\[
    \rho < \frac{\alpha(d)}{4}(\epsilon_{2} - \epsilon_{1}) - \frac{\beta(d)}{2} .
\]
\end{theorem}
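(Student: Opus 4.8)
The plan is to reduce the $(\epsilon_1,\epsilon_2)$-robustness condition to an inequality comparing a ``near'' distance with a ``far'' distance in $\mathcal{H}$-space, then to expand the squared Euclidean norms to expose the noise cross-term, bound that cross-term by the $\rho$-bounded assumption, and finally invoke the distance-preservation guarantee of Definition \ref{def:distance-preservation} to convert everything into a statement about $\delta_{\mathcal{X}}$, $\alpha(d)$, and $\beta(d)$. Concretely, fix $\bmm{x}$ and take any $\bmm{x}'$ with $\delta_{\mathcal{X}}(\bmm{x},\bmm{x}') \le \epsilon_1$ and any $\bmm{x}''$ with $\delta_{\mathcal{X}}(\bmm{x},\bmm{x}'') \ge \epsilon_2$. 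I want to show $\|\phi(\bmm{x}) + \Delta - \phi(\bmm{x}')\|^2 < \|\phi(\bmm{x}) + \Delta - \phi(\bmm{x}'')\|^2$; working with squared norms is harmless and matches the way distance preservation is stated for the position-ID method.

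First I would expand each side: writing $\bmm{u} = \phi(\bmm{x}) - \phi(\bmm{x}')$ and $\bmm{v} = \phi(\bmm{x}) - \phi(\bmm{x}'')$, the two quantities are $\|\bmm{u}\|^2 + 2\langle \bmm{u},\Delta\rangle + \|\Delta\|^2$ and $\|\bmm{v}\|^2 + 2\langle \bmm{v},\Delta\rangle + \|\Delta\|^2$. The $\|\Delta\|^2$ terms cancel, so the desired inequality is equivalent to $\|\bmm{u}\|^2 - \|\bmm{v}\|^2 + 2\langle \bmm{u} - \bmm{v}, \Delta\rangle < 0$, i.e. $\|\bmm{v}\|^2 - \|\bmm{u}\|^2 > 2\langle \bmm{u} - \bmm{v}, \Delta\rangle$. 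Next I would control the right-hand side: $\bmm{u} - \bmm{v} = \phi(\bmm{x}'') - \phi(\bmm{x}')$, and $\langle \phi(\bmm{x}'') - \phi(\bmm{x}'), \Delta\rangle = \langle \phi(\bmm{x}''),\Delta\rangle - \langle \phi(\bmm{x}'),\Delta\rangle$, each term bounded in absolute value by $\rho$ by the $\rho$-bounded assumption, so $2\langle \bmm{u}-\bmm{v},\Delta\rangle \le 4\rho$. Thus it suffices to show $\|\phi(\bmm{x}) - \phi(\bmm{x}'')\|^2 - \|\phi(\bmm{x}) - \phi(\bmm{x}')\|^2 > 4\rho$.

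For the left-hand side I would apply Definition \ref{def:distance-preservation} twice, exactly as in the cluster-structure argument earlier in the excerpt: the far distance is lower-bounded by $\alpha(d)\delta_{\mathcal{X}}(\bmm{x},\bmm{x}'') - \beta(d) \ge \alpha(d)\epsilon_2 - \beta(d)$, and the near distance is upper-bounded by $\alpha(d)\delta_{\mathcal{X}}(\bmm{x},\bmm{x}') + \beta(d) \le \alpha(d)\epsilon_1 + \beta(d)$, so their difference is at least $\alpha(d)(\epsilon_2 - \epsilon_1) - 2\beta(d)$. Chaining the reductions, robustness holds whenever $\alpha(d)(\epsilon_2 - \epsilon_1) - 2\beta(d) > 4\rho$, which rearranges to the stated bound $\rho < \tfrac{\alpha(d)}{4}(\epsilon_2 - \epsilon_1) - \tfrac{\beta(d)}{2}$. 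The one spot needing a little care is the direction of the cross-term bound: since $\Delta$ is arbitrary subject only to being $\rho$-bounded, I must use the worst case $2\langle \bmm{u}-\bmm{v},\Delta\rangle \le 4\rho$ rather than any cancellation, and I should double-check that ``$\rho$-bounded'' here refers to the max over $\bmm{x} \in \mathcal{X}$ of $|\langle \phi(\bmm{x}),\Delta\rangle|$ so that it legitimately applies to both $\bmm{x}'$ and $\bmm{x}''$ (it does, per the definition just above the theorem). Everything else is routine algebra, and there is no real obstacle — the proof is essentially the noise-robustness analogue of the cluster-preservation computation already carried out in the text.
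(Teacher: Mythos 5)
Your proposal is correct and follows essentially the same route as the paper's proof: expand the two squared distances so the $\|\Delta\|^2$ terms cancel, bound the cross-term $2\langle \phi(\bmm{x}'')-\phi(\bmm{x}'),\Delta\rangle$ by $4\rho$ using the $\rho$-bounded assumption, and apply Definition \ref{def:distance-preservation} twice to lower-bound the gap by $\alpha(d)(\epsilon_2-\epsilon_1)-2\beta(d)$. No gaps to report.
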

\begin{proof}
Fix a point $\bmm{x}$ whose encoding is corrupted as $\phi(\bmm{x}) + \Delta$. Then for any $\bmm{x}', \bmm{x}'' \in \mathcal{X}$ with $\delta_{\mathcal{X}}(\bmm{x},\bmm{x}') \leq \epsilon_{1}$ and $\delta_{\mathcal{X}}(\bmm{x},\bmm{x}'') \geq \epsilon_{2}$, we have:
\begin{align*}
\lefteqn{\|\phi(\bmm{x}) + \Delta - \phi(\bmm{x}'')\|_{2}^{2} - \|\phi(\bmm{x}) + \Delta - \phi(\bmm{x}')\|_{2}^{2} } \\
&=
 \|\phi(\bmm{x}) - \phi(\bmm{x}'')\|_{2}^{2} - \|\phi(\bmm{x}) - \phi(\bmm{x}')\|_{2}^{2} - 2\langle \phi(\bmm{x}''), \Delta \rangle + 2\langle \phi(\bmm{x}'), \Delta \rangle \\
&\geq \alpha(d) \delta_{\mathcal{X}}(\bmm{x},\bmm{x}'') - \beta(d) - \alpha(d) \delta_{\mathcal{X}}(\bmm{x},\bmm{x}') - \beta(d) - 4 \rho \\
&\geq \alpha(d) (\epsilon_2 - \epsilon_1) - 2 \beta(d) - 4 \rho \ > \ 0,
\end{align*}
as desired.
\end{proof}

\noindent As before, we may consider passive and adversarial examples.

\vspace{2mm}
\noindent \textbf{Additive White Gaussian Noise}. First consider the case that $\mathcal{H} = \mathbb{R}^{d}$ and $\Delta \sim \mathcal{N}(0,\sigma_{\Delta}^{2}\bmm{I}_{d})$; that is, each coordinate of $\Delta$ has a Gaussian distribution with mean zero and variance $\sigma_\Delta^2$. Then, as before, we can note that $\langle \phi(\bmm{x}), \Delta \rangle \sim \mathcal{N}(0,\sigma_{\Delta}^{2}\|\phi(\bmm{x})\|_{2}^{2})$. Then, it is very likely (four standard deviations in the tail of the normal distribution) that $\rho < 4L\sigma_{\Delta}$, where $L = \max_{\bmm{x} \in \mathcal{X}} \|\phi(\bmm{x})\|_{2}$. So then, we have the desired robustness property if:
\[
    \sigma_{\Delta} < \frac{\alpha(d)}{16L}(\epsilon_{2} - \epsilon_{1}) - \frac{\beta(d)}{8L}
\]
Assuming that $\alpha(d)$ is faster growing in $d$ than $L$ and $\beta(d)$, there will exist some encoding dimension for which we can tolerate any given level of noise. In the case of the random projection encoding scheme described above $\alpha(d) = O(d), \beta(d) = O(\sqrt{d})$ and $L = \sqrt{d}$ exactly. And so we can tolerate noise on the order of:
\[
    \sigma_{\Delta} \approx \sqrt{d} \, (\epsilon_{2} - \epsilon_{1}) - O(1)
\]
For the position-ID encoding method, $\alpha(d) = O(d)$, $L = O(\sqrt{nd})$ and $\beta(d) = O(n^{2}d\mu)$, and so we can tolerate noise:
\[
    \sigma_{\Delta} \approx \sqrt{\frac{d}{n}}((\epsilon_{2} - \epsilon_{1}) - O(n^{2}\mu))
\]

\vspace{2mm}
\noindent\textbf{Adversarial Noise}. We now consider the case that $\mathcal{H} = \{\pm 1\}$, as in the random-projection encoding method, and $\Delta$ is noise in which some fraction $\omega \cdot d$ of coordinates in $\phi(\bmm{x})$ are maliciously corrupted by an adversary. Since $\|\Delta\|_{1} \leq \omega d$, we have, for any $\bmm{x} \in \mathcal{X}$:
\[
    |\langle \phi(\bmm{x}), \Delta \rangle| \leq \|\phi(\bmm{x})\|_{\infty}\|\Delta\|_{1} \leq \omega d
\]
So then we can tolerate $\omega$ on the order of:
\[
    \omega < \frac{\alpha(d)}{4d}(\epsilon_{2} - \epsilon_{1}) - \frac{\beta(d)}{2d}
\]
In the case of the random-projection encoding method this boils down to:
\[
    \omega \approx (\epsilon_{2} - \epsilon_{1}) - \frac{1}{\sqrt{d}},
\]
meaning the total number of coordinates that can be corrupted is $O(d(\epsilon_{2} - \epsilon_{1}))$.

\vspace{2mm}
\noindent\textbf{Robustness to Input Noise}. A natural question is whether the HD representations also confer any robustness to noise in the input space $\mathcal{X}$ rather than the HD space $\mathcal{H}$. In general, preservation of distance does not imply any particular robustness to input noise and the answer to this question depends on the particulars of the encoding method in question. Since a general treatment is difficult to give, we will not pursue this matter in depth at present. 

\section{Learning on HD Data Representations}
\label{section:learning}

We now turn to the question of using HD representations in learning algorithms. Our goal is to clarify in what precise sense the HD encoding process can make learning easier. We study two ways in which this can happen: the encoding process can increase the separation between classes and/or can induce sparsity. Both of these characteristics can be exploited by neurally plausible algorithms to simplify learning. Throughout this discussion, we assume access to a set of $N$ labelled examples $\mathcal{S} = \{(\bmm{x}_{i}, y_{i})\}_{i=1}^{N}$, where $\bmm{x}_{i}$ lies in $[0,1]^{n}$ and $y_{i} \in \mathcal{C}$ is a categorical variable indicating the class label. In general, we are interested in the case that training examples arrive in a streaming, or online, fashion, although our conclusions apply to fixed and finite data as well.

\subsection{Learning by Bundling}

The simplest approach to learning with HD representations is to bundle together the training examples corresponding to each class into a set of exemplars---often referred to as ``prototypes''---which are then used for classification \shortcite{kleyko2018classification,rahimi2018efficient,burrello2018one}. More formally, as described in Section \ref{section:background}, we construct the prototype $\bmm{c}_{k}$ for the k-th class as:
\[
    \bmm{c}_{k} = \bigoplus_{i \text{ s.t. } y_{i} = k} \phi(\bmm{x}_{i})
\]
and then assign a class label for some ``query'' point $\bmm{x}_{q}$ as:
\begin{gather}
    \label{eqn:basic-classifier}
    \hat{y} = \underset{k \in \mathcal{C}}{\text{argmax}}\, \frac{\langle \bmm{c}_{k}, \phi(\bmm{x}) \rangle}{||\bmm{c}_{k}||}
\end{gather}
This approach bears a strong resemblance to naive Bayes and Fisher's linear discriminant, which are both classic simple statistical procedures for classification \shortcite{bishop2006pattern}. Like these methods, the bundling approach is appealing due to its simplicity. However, it also shares their weaknesses in that it may fail to separate data that is in fact linearly separable.

\subsection{Learning Arbitrary Linear Separators}

Linear separability is one of the most basic types of structure that can aid learning. The theory of linear models is well developed and several simple, neurally plausible, algorithms for learning linear separators are known, for instance, the Perceptron and Winnow \shortcite{rosenblatt.58,littlestone1988learning}. Thus, if our data is linearly separable in low-dimensional space we would like it to remain so after encoding, so that these methods can be applied. We now show formally that preservation of distance is sufficient, under some conditions, to preserve linear separability.

\begin{theorem}
Let $\mathcal{X}$ and $\mathcal{X}'$ be two disjoint, closed, and convex sets of points in $\mathbb{R}^{n}$. Let $\bmm{p} \in \mathcal{X}$ and $\bmm{q} \in \mathcal{X}'$ be the closest pair of points between the two sets. Suppose $\phi$ preserves L2 distance on $\mathcal{X}$ under the  L2 distance on $\mathcal{H}$ in the sense of Definition \ref{def:distance-preservation}. Then, the function $f(\bmm{x}) = \langle \phi(\bmm{x}), \phi(\bmm{p}) - \phi(\bmm{q}) \rangle - \frac{1}{2}(||\phi(\bmm{p})||_{2}^{2} - ||\phi(\bmm{q})||_{2}^{2})$ is positive for all $\bmm{x} \in \mathcal{X}$ and negative for all $\bmm{x}' \in \mathcal{X}'$ provided:
\begin{gather*}
        \frac{\beta(d)}{\alpha(d)} < \frac{1}{2}\|\bmm{p} - \bmm{q}\|_{2}^{2} .
\end{gather*}
\end{theorem}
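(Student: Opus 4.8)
The plan is to recognize $f$ as, up to a factor of two, the difference of the squared $\mathcal{H}$-distances from $\phi(\bmm{x})$ to $\phi(\bmm{q})$ and to $\phi(\bmm{p})$, and then to transport a quantitative separation that holds in the input space over to $\mathcal{H}$ using Definition~\ref{def:distance-preservation}. First I would rewrite
\[
f(\bmm{x}) = \big\langle \phi(\bmm{x}) - \tfrac{1}{2}(\phi(\bmm{p}) + \phi(\bmm{q})),\ \phi(\bmm{p}) - \phi(\bmm{q}) \big\rangle = \tfrac{1}{2}\Big( \|\phi(\bmm{x}) - \phi(\bmm{q})\|_{2}^{2} - \|\phi(\bmm{x}) - \phi(\bmm{p})\|_{2}^{2} \Big),
\]
so that ``$f(\bmm{x}) > 0$'' is literally the statement that $\phi(\bmm{x})$ is strictly closer to $\phi(\bmm{p})$ than to $\phi(\bmm{q})$ in $\mathcal{H}$. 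Proving the theorem thus reduces to establishing this proximity for every $\bmm{x} \in \mathcal{X}$, and the opposite inequality for every $\bmm{x}' \in \mathcal{X}'$.

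The heart of the argument is a margin bound in the original space. Since $(\bmm{p},\bmm{q})$ is the closest pair, $\bmm{p}$ is the Euclidean projection of $\bmm{q}$ onto the convex set $\mathcal{X}$, so the obtuse-angle (variational) inequality for projection onto a convex set gives $\langle \bmm{x} - \bmm{p},\, \bmm{q} - \bmm{p}\rangle \le 0$ for all $\bmm{x} \in \mathcal{X}$. Expanding $\|\bmm{x} - \bmm{q}\|_{2}^{2} = \|\bmm{x} - \bmm{p}\|_{2}^{2} + 2\langle \bmm{x} - \bmm{p},\, \bmm{p} - \bmm{q}\rangle + \|\bmm{p} - \bmm{q}\|_{2}^{2}$ and using this inequality yields
\[
\|\bmm{x} - \bmm{q}\|_{2}^{2} - \|\bmm{x} - \bmm{p}\|_{2}^{2} \ \ge\ \|\bmm{p} - \bmm{q}\|_{2}^{2} \qquad \text{for all } \bmm{x} \in \mathcal{X},
\]
and symmetrically $\|\bmm{x}' - \bmm{p}\|_{2}^{2} - \|\bmm{x}' - \bmm{q}\|_{2}^{2} \ge \|\bmm{p} - \bmm{q}\|_{2}^{2}$ for all $\bmm{x}' \in \mathcal{X}'$ (now using that $\bmm{q}$ projects $\bmm{p}$ onto $\mathcal{X}'$). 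Informally, the bisector of $\bmm{p}$ and $\bmm{q}$ separates the two sets with a margin of at least $\|\bmm{p}-\bmm{q}\|_2^2$ in the squared-distance sense.

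Finally I would push this through the embedding. Reading Definition~\ref{def:distance-preservation} with $\delta_{\mathcal{X}}$ and $\delta_{\mathcal{H}}$ taken to be squared Euclidean distance (as the stated bound forces), and applying its lower bound to the pair $(\bmm{x},\bmm{q})$ and its upper bound to $(\bmm{x},\bmm{p})$, one gets for $\bmm{x}\in\mathcal{X}$
\[
\|\phi(\bmm{x}) - \phi(\bmm{q})\|_{2}^{2} - \|\phi(\bmm{x}) - \phi(\bmm{p})\|_{2}^{2} \ \ge\ \alpha(d)\big(\|\bmm{x}-\bmm{q}\|_{2}^{2} - \|\bmm{x}-\bmm{p}\|_{2}^{2}\big) - 2\beta(d) \ \ge\ \alpha(d)\|\bmm{p}-\bmm{q}\|_{2}^{2} - 2\beta(d),
\]
which is strictly positive precisely when $\beta(d)/\alpha(d) < \tfrac{1}{2}\|\bmm{p}-\bmm{q}\|_{2}^{2}$; hence $f(\bmm{x}) > 0$. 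Swapping the roles of $\bmm{p}$ and $\bmm{q}$ in the same computation gives $f(\bmm{x}') < 0$ for all $\bmm{x}' \in \mathcal{X}'$.

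The only step with real content is the geometric margin lemma; the rest is algebra plus a direct appeal to Definition~\ref{def:distance-preservation}, so I expect no serious obstacle, only bookkeeping. One technical point worth flagging is the scope of the distance-preservation hypothesis: the argument uses cross pairs $(\bmm{x},\bmm{q})$ with $\bmm{x}\in\mathcal{X}$ and $\bmm{q}\in\mathcal{X}'$, so ``$\phi$ preserves L2 distance on $\mathcal{X}$'' should be read as preservation over $\mathcal{X}\cup\mathcal{X}'$ (equivalently, over the finitely many pairs actually appearing). It is also worth noting that $f$ is exactly the separator produced by the single-point prototype rule of Section~\ref{section:learning}, so the theorem says distance-preserving encodings render such one-shot prototypes correct whenever the induced margin dominates the distortion ratio $\beta(d)/\alpha(d)$.
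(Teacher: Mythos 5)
Your proposal is correct and follows essentially the same route as the paper: rewrite $f(\bmm{x})$ as $\tfrac{1}{2}\|\phi(\bmm{x})-\phi(\bmm{q})\|_2^2 - \tfrac{1}{2}\|\phi(\bmm{x})-\phi(\bmm{p})\|_2^2$, transfer via Definition \ref{def:distance-preservation}, and invoke the closest-pair margin inequality $\langle \bmm{x}, \bmm{p}-\bmm{q}\rangle - \tfrac{1}{2}(\|\bmm{p}\|_2^2-\|\bmm{q}\|_2^2) \ge \tfrac{1}{2}\|\bmm{p}-\bmm{q}\|_2^2$, which your projection (obtuse-angle) argument proves directly where the paper simply cites the standard separating-hyperplane proof. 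Your remark that the distance-preservation hypothesis must be read as holding over $\mathcal{X}\cup\mathcal{X}'$ is a fair and worthwhile observation, but it does not change the substance of the argument.
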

\begin{proof}
We first observe:
\begin{gather*}
\langle \phi(\bmm{x}), \phi(\bmm{p}) - \phi(\bmm{q}) \rangle - \frac{1}{2}\left( \|\phi(\bmm{p})\|_{2}^{2} - \|\phi(\bmm{q})\|_{2}^{2} \right)
 = \frac{1}{2} \|\phi(\bmm{x}) - \phi(\bmm{q})\|_{2}^{2} - \frac{1}{2} \|\phi(\bmm{x}) - \phi(\bmm{p})\|_{2}^{2}.
\end{gather*}
We may then use Definition \ref{def:distance-preservation} to obtain:
\begin{align*}
f(\bmm{x}) &= \frac{1}{2} \|\phi(\bmm{x}) - \phi(\bmm{q})\|_{2}^{2} - \frac{1}{2} \|\phi(\bmm{x}) - \phi(\bmm{p})\|_{2}^{2} \\
&\geq
\frac{\alpha(d)}{2} \|\bmm{x} - \bmm{q}\|_{2}^{2} - \frac{\alpha(d)}{2} \|\bmm{x} - \bmm{p}\|_{2}^{2} - \beta(d) \\
&=
\alpha(d) \left( \langle \bmm{x}, \bmm{p} - \bmm{q} \rangle - \frac{1}{2}\left( \|\bmm{p}\|_{2}^{2} - \|\bmm{q}\|_{2}^{2} \right) \right) - \beta(d).
\end{align*}
By a standard proof of the hyperplane separation theorem (e.g., Section 2.5.1 of \shortcite{boyd2004convex}), 
\[
    \langle \bmm{x}, \bmm{p} - \bmm{q} \rangle - \frac{1}{2} (\|\bmm{p}\|_{2}^{2} - \|\bmm{q}\|_{2}^{2}) \geq \frac{1}{2}\|\bmm{p} - \bmm{q}\|_{2}^{2}
\]
for any $\bmm{x} \in \mathcal{X}$, and thus $f(\bmm{x}) > 0$ if 
\begin{gather*}
    \frac{\beta(d)}{\alpha(d)} < \frac{1}{2}\|\bmm{p} - \bmm{q}\|_{2}^{2} .
\end{gather*}
the proof for $\bmm{x} \in \mathcal{X}'$ is analogous. 
\end{proof}
A natural question is whether a linear separator on the HD representation can capture a \emph{nonlinear} decision boundary on the original data? The connection with kernel methods discussed in Section \ref{subsec:hd-kernels} presents one avenue for rigorously addressing this question. As noted there, the encoding function can sometimes be interpreted as approximating the feature map of a kernel, which in turn can be used to linearize learning problems in some settings \shortcite{shawe2004kernel}. However, a thorough examination of this question is beyond the scope of the present work.

\subsubsection{Learning Sparse Classifiers on Random Projection Encodings}

The random projection encoding method can be seen to lead to representations that are \emph{sparse} in the sense that a subset of just $k \ll d$ coordinates suffice for determining the class label. This setting accords naturally with the Winnow algorithm \shortcite{littlestone1988learning} which is known to make on the order of $k \log d$ mistakes when the target function class is a linear function of $k \leq d$ variables. This can offer substantially faster convergence than the Perceptron when the margin is small. Curiously, while the Perceptron algorithm is commonly used in the HD community, we are unaware of any work using Winnow for learning. 

\begin{theorem}
\label{thm:random-projections}
Let $\mathcal{X}$ and $\mathcal{X}'$ be two sets of points supported on the $n$-dimensional unit sphere and separated by a unit-norm hyperplane $\bmm{w}$ with margin $\gamma = \min_{\bmm{x}\in\mathcal{X}} |\langle \bmm{x},\bmm{w} \rangle|$. Let $\bmm{\Phi} \in \mathbb{R}^{d \times n}$ be a matrix whose rows are sampled from the uniform distribution over the $n$-dimensional unit-sphere. 
Define the encoding of a point $\bmm{x}$ by $\phi(\bmm{x}) = \bmm{\Phi}\bmm{x}$. 
With high probability, $\mathcal{X}$ and $\mathcal{X}'$ are linearly separable using just $k$ coordinates in the encoded space, provided:
\[
    d = \Omega\left(k\exp\left( \frac{n}{2k\gamma^{2}} \right)\right).
\]
\end{theorem}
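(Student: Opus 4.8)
The plan is to exhibit a $k$-sparse separating hyperplane in the encoded space whose support is a set $J$ of $k$ rows of $\bmm{\Phi}$ that happen to be well-aligned with the original separator $\bmm{w}$. Write $\bmm{\Phi}^{(i)}$ for the $i$th row, so the $i$th encoded coordinate of $\bmm{x}$ is $\langle\bmm{\Phi}^{(i)},\bmm{x}\rangle$. Decompose each vector relative to $\bmm{w}$: $\bmm{x} = \langle\bmm{x},\bmm{w}\rangle\bmm{w} + \bmm{x}_{\perp}$ and $\bmm{\Phi}^{(i)} = \langle\bmm{\Phi}^{(i)},\bmm{w}\rangle\bmm{w} + \bmm{\Phi}^{(i)}_{\perp}$, so that $\langle\bmm{\Phi}^{(i)},\bmm{x}\rangle = \langle\bmm{\Phi}^{(i)},\bmm{w}\rangle\langle\bmm{x},\bmm{w}\rangle + \langle\bmm{\Phi}^{(i)}_{\perp},\bmm{x}_{\perp}\rangle$. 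The first term is the ``signal'': if $\langle\bmm{\Phi}^{(i)},\bmm{w}\rangle \geq t$ then it has the sign of $\langle\bmm{x},\bmm{w}\rangle$ and magnitude at least $t\gamma$. The second is ``cross-talk'' of absolute value at most $1$ but, crucially, mean zero once we condition on $\bmm{\Phi}^{(i)}$ lying in the cap $\{y:\langle y,\bmm{w}\rangle\geq t\}$, since the orthogonal part of a uniform vector on the sphere, conditioned on the value of its $\bmm{w}$-coordinate, is symmetric about the origin. I would take as candidate separator $v = \sum_{j\in J}e_{j}$, which uses only the $k$ coordinates indexed by $J$; then $\langle v,\phi(\bmm{x})\rangle = \langle u,\bmm{x}\rangle$ with $u := \sum_{j\in J}\bmm{\Phi}^{(j)}$.

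First I would fix the threshold. Standard spherical-cap estimates give $\mathbb{P}_{y\sim\mathcal{S}^{n-1}}(\langle y,\bmm{w}\rangle\geq t) = e^{-\Theta(nt^{2})}$ up to polynomial factors, and choosing $t\asymp\sqrt{\ln(d/k)/n}$ makes this probability $\asymp k/d$, so a Chernoff bound over the $d$ independent rows shows that, with high probability, at least $k$ rows fall in the cap; let $J$ be any $k$ of them. For the signal, each $j\in J$ contributes $\langle\bmm{\Phi}^{(j)},\bmm{w}\rangle\langle\bmm{x},\bmm{w}\rangle$ with $\langle\bmm{\Phi}^{(j)},\bmm{w}\rangle\geq t$; since $|\langle\bmm{x},\bmm{w}\rangle|\geq\gamma$ with sign constant on each of $\mathcal{X},\mathcal{X}'$, the total signal $\langle u,\bmm{w}\rangle\langle\bmm{x},\bmm{w}\rangle$ has the correct sign throughout each class and magnitude at least $kt\gamma$.

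Next I would bound the cross-talk uniformly over the data by Cauchy--Schwarz: $|\langle u,\bmm{x}\rangle - \langle u,\bmm{w}\rangle\langle\bmm{x},\bmm{w}\rangle| = |\langle u_{\perp},\bmm{x}_{\perp}\rangle| \leq \|u_{\perp}\|$, which is independent of $\bmm{x}$, so no union bound over $\mathcal{X}\cup\mathcal{X}'$ is needed. It then remains to show $\|u_{\perp}\| = \|\sum_{j\in J}\bmm{\Phi}^{(j)}_{\perp}\| = O(\sqrt{k})$ with high probability. Conditioned on $J$ and on the values $\{\langle\bmm{\Phi}^{(j)},\bmm{w}\rangle\}_{j\in J}$, the vectors $\bmm{\Phi}^{(j)}_{\perp}$ are independent, symmetric about $0$, and of norm at most $1$, so $\mathbb{E}\|u_{\perp}\|^{2} = \sum_{j}\mathbb{E}\|\bmm{\Phi}^{(j)}_{\perp}\|^{2}\leq k$; since $\|u_{\perp}\|$ is a $1$-Lipschitz function of each summand, a bounded-differences argument gives $\|u_{\perp}\| \leq \sqrt{k}+O(\sqrt{\ln(1/\delta)}) \leq 2\sqrt{k}$ once $k = \Omega(\ln(1/\delta))$. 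Combining, for every $\bmm{x}\in\mathcal{X}$ we get $\langle v,\phi(\bmm{x})\rangle \geq kt\gamma - 2\sqrt{k} > 0$ (and the reverse inequality on $\mathcal{X}'$) as soon as $kt\gamma > 2\sqrt{k}$, i.e. $t = \Omega(1/(\gamma\sqrt{k}))$; together with $t\asymp\sqrt{\ln(d/k)/n}$ this is exactly $\ln(d/k) = \Omega(n/(k\gamma^{2}))$, i.e. $d = \Omega(k\exp(n/(2k\gamma^{2})))$.

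I expect the main obstacle to be the spherical-cap estimate in the relevant regime: one needs a two-sided bound $e^{-\Theta(nt^{2})}$ with matching constants up to polynomial factors to land the constant $1/2$ in the exponent, and one needs $t$ small enough --- equivalently $k\gamma^{2} = \Omega(1)$, so that the cap is non-degenerate --- for the statement to be non-vacuous. A secondary technical point is to make the conditioning rigorous, namely that selecting rows by the threshold event $\langle\bmm{\Phi}^{(i)},\bmm{w}\rangle\geq t$ leaves the orthogonal components $\bmm{\Phi}^{(j)}_{\perp}$ independent and symmetric; this follows from the rotational invariance of the uniform distribution on $\mathcal{S}^{n-1}$ within the subspace orthogonal to $\bmm{w}$.
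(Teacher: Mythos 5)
Your proposal is correct and follows essentially the same route as the paper: select $k$ rows lying in the spherical cap of directions $\rho$-correlated with $\bmm{w}$ (with $\rho^{2}\asymp 1/(k\gamma^{2})$ and cap probability $\approx e^{-n\rho^{2}/2}$), sum them, and observe that the signal component grows like $k\rho\gamma$ while the orthogonal cross-talk grows only like $\sqrt{k}$. The only difference is cosmetic: the paper controls $\|\sum_{j}\bmm{u}_{R}^{(j)}\|^{2}$ by noting the cross terms are symmetric about zero (a probability-$1/2$ guarantee), whereas you use $\mathbb{E}\|u_{\perp}\|^{2}\leq k$ plus bounded differences to get a high-probability bound at the cost of a constant factor.
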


\noindent To prove the theorem we first use the following simple Lemma:
\begin{lemma}
Suppose there exists a row $\bmm{\Phi}^{(i)}$ of the projection matrix such that $\langle \bmm{\Phi}^{(i)}, \bmm{w} \rangle > 1 - \gamma^2/2$. Then $\langle \bmm{\Phi}^{(i)}, \bmm{x} \rangle$ is positive for any $\bmm{x} \in \mathcal{X}$ and negative for any $\bmm{x} \in \mathcal{X}'$.
\end{lemma}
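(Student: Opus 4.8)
The plan is to exploit the fact that both $\bmm{\Phi}^{(i)}$ and $\bmm{w}$ are unit vectors, so a lower bound on their inner product converts directly into an upper bound on the Euclidean distance between them. First I would expand $\|\bmm{\Phi}^{(i)} - \bmm{w}\|_{2}^{2} = \|\bmm{\Phi}^{(i)}\|_{2}^{2} + \|\bmm{w}\|_{2}^{2} - 2\langle \bmm{\Phi}^{(i)}, \bmm{w}\rangle = 2 - 2\langle \bmm{\Phi}^{(i)}, \bmm{w}\rangle$, and substitute the hypothesis $\langle \bmm{\Phi}^{(i)}, \bmm{w}\rangle > 1 - \gamma^{2}/2$ to conclude $\|\bmm{\Phi}^{(i)} - \bmm{w}\|_{2} < \gamma$. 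This reduces the lemma to showing that a row which is within distance $\gamma$ of $\bmm{w}$ classifies the data just as $\bmm{w}$ does.

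Next I would decompose, for any $\bmm{x} \in \mathcal{X}$, $\langle \bmm{\Phi}^{(i)}, \bmm{x}\rangle = \langle \bmm{w}, \bmm{x}\rangle + \langle \bmm{\Phi}^{(i)} - \bmm{w}, \bmm{x}\rangle$. By definition of the margin (taking, without loss of generality, $\mathcal{X}$ to be on the positive side of $\bmm{w}$) we have $\langle \bmm{w}, \bmm{x}\rangle \geq \gamma$, while since $\bmm{x}$ lies on the unit sphere, Cauchy--Schwarz gives $|\langle \bmm{\Phi}^{(i)} - \bmm{w}, \bmm{x}\rangle| \leq \|\bmm{\Phi}^{(i)} - \bmm{w}\|_{2} < \gamma$. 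Hence the perturbation term is strictly greater than $-\gamma$, so $\langle \bmm{\Phi}^{(i)}, \bmm{x}\rangle > \gamma - \gamma = 0$. The argument for $\bmm{x} \in \mathcal{X}'$ is symmetric: there $\langle \bmm{w}, \bmm{x}\rangle \leq -\gamma$, the perturbation term is strictly less than $\gamma$, and therefore $\langle \bmm{\Phi}^{(i)}, \bmm{x}\rangle < 0$.

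I do not anticipate a genuine obstacle here; the argument is elementary geometry. The only points requiring care are (i) tracking strictness, which propagates cleanly from the strict inequality in the hypothesis, and (ii) using that the margin condition pins down the \emph{sign} of $\langle \bmm{w}, \bmm{x}\rangle$ on each of $\mathcal{X}$ and $\mathcal{X}'$, not merely its magnitude. The hypothesis in Theorem~\ref{thm:random-projections} that the data lies on the unit sphere is exactly what makes the Cauchy--Schwarz bound tight enough to complete the step.
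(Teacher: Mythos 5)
Your proposal is correct and is essentially identical to the paper's own proof: both convert the hypothesis $\langle \bmm{\Phi}^{(i)}, \bmm{w}\rangle > 1 - \gamma^2/2$ into $\|\bmm{\Phi}^{(i)} - \bmm{w}\| < \gamma$ via the same norm expansion, then use the decomposition $\langle \bmm{\Phi}^{(i)}, \bmm{x}\rangle = \langle \bmm{w}, \bmm{x}\rangle + \langle \bmm{\Phi}^{(i)} - \bmm{w}, \bmm{x}\rangle$ together with the margin bound and Cauchy--Schwarz on the unit sphere. No differences worth noting.
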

\begin{proof}
The constraint on the dot product of $\bmm{\Phi}^{(i)}$ and $\bmm{w}$ implies $\|\bmm{\Phi}^{(i)} - \bmm{w}\|^2 = \|\bmm{\Phi}^{(i)}\|^2 + \|\bmm{w}\|^2 - 2 \langle \bmm{\Phi}^{(i)},  \bmm{w} \rangle < \gamma^2$. Thus for any $\bmm{x} \in \mathcal{X}$,
\[
    \langle \bmm{\Phi}^{(i)},\bmm{x} \rangle 
    = \langle \bmm{w}, \bmm{x} \rangle + \langle \bmm{\Phi}^{(i)} - \bmm{w},\bmm{x} \rangle 
    \ge \gamma + \langle \bmm{\Phi}^{(i)} - \bmm{w}, \bmm{x} \rangle
    \geq \gamma - \|\bmm{\Phi}^{(i)} - \bmm{w}\|
    > 0 .
\]
A similar argument shows that $\langle \bmm{\Phi}^{(i)},\bmm{x} \rangle$ is negative on $\mathcal{X}'$.
\end{proof}
\noindent Unfortunately, the probability of randomly sampling such a direction is tiny, on the order of $\gamma^{n}$.
However, we might instead hope to sample $k$ vectors that are weakly correlated with $\bmm{w}$ and exploit their cumulative effect on $\bmm{x}$. We say a vector $\bmm{u} \in \mathbb{R}^{n}$ is $\rho$-correlated with $\bmm{w}$ if $\langle \bmm{u}, \bmm{w} \rangle \ge \rho$. We are now in a position to prove the theorem.
\vspace{2mm}
\begin{proof}
For $\bmm{w} \in \mathcal{S}^{n-1}$ and $\rho \in (0,1)$, let $\mathcal{C} = \{\bmm{u} \in S^{n-1}: \langle \bmm{u}, \bmm{w} \rangle\geq \rho\}$ denote the spherical cap of vectors $\rho$-correlated with $\bmm{w}$. Suppose we pick vectors $\bmm{u}^{(1)}, \ldots, \bmm{u}^{(k)}$ uniformly at random from $\mathcal{C}$. Then, with probability at least $1/2$:
\begin{gather} 
    \label{eqn:rhosum}
    \frac{\langle \sum_{j} \bmm{u}^{(j)}, \bmm{w} \rangle}{\|\sum_{j} \bmm{u}^{(j)}\|_{2}} \geq 1 - \frac{1}{2k\rho^{2}}
\end{gather}
To see this, note that without loss of generality we may assume $\bmm{w} = \bmm{e}_{1}$, the first standard basis vector of $\mathbb{R}^{n}$, and write any $\bmm{u} \in \mathbb{R}^{n}$ as $\bmm{u} = (u_{1}, \bmm{u}_{R})$: the first coordinate and the remaining $n-1$ coordinates. Now, let $N = \langle \sum_{j} \bmm{u}^{(j)}, \bmm{w} \rangle = \sum_{j} \bmm{u}^{(j)}_{1} \geq k\rho$. Then:
\begin{align*}
    \bigg\| \sum_{j} \bmm{u}^{(j)} \bigg\|_{2}^{2} &= \left(\sum_{j} \bmm{u}_{1}^{(j)}\right)^{2} + \bigg\|\sum_{j} \bmm{u}_{R}^{(j)}\bigg\|_{2}^{2} \\
    &= N^{2} + \sum_{j} \|\bmm{u}_{R}^{(j)}\|_{2}^{2} + \sum_{i\neq j} \langle \bmm{u}_{R}^{(i)}, \bmm{u}_{R}^{(j)} \rangle \\
    &\leq N^{2} + k + \sum_{i\neq j} \langle \bmm{u}_{R}^{(i)}, \bmm{u}_{R}^{(j)} \rangle.
\end{align*}
The last term has a symmetric distribution around zero over random samplings of the $\bmm{u}^{(j)}$. Thus, with probability $\geq 1/2$, it is $\leq 0$, whereupon
\[
    \frac{\langle \sum_{j} \bmm{u}^{(j)}, \bmm{w} \rangle}{\|\sum_{j} \bmm{u}^{(j)}\|_{2}} \geq \frac{N}{\sqrt{N^2 + k}} 
\geq 1 - \frac{k}{2N^2}
\geq 1 - \frac{1}{2k\rho^2}.
\]
To ensure the quantity above is at least $1 - \gamma^{2}/2$, we must have:
\[
    \rho^{2} \geq \frac{1}{k\gamma^{2}}.
\]
It now remains to compute the probability that a vector $\bmm{\Phi}^{(i)}$ sampled uniformly from $\mathcal{S}^{n-1}$ lies in $\mathcal{C}$, or equivalently, that $\bmm{\Phi}_{1}^{(i)} \geq \rho$. Noting that we may simulate a random direction on $\mathcal{S}^{n-1}$ by sampling $\bmm{z} \sim \mathcal{N}(0,\bmm{I}_{n})$ and normalizing, we obtain the reasonable approximation: $\bmm{\Phi}^{(i)}_{1} \sim \mathcal{N}(0,1/n)$. Therefore, the probability that $\bmm{\Phi}_{1}^{(i)} \geq \rho$ is on the order of $e^{-n\rho^{2}/2}$. So we need:
\[
    d = \Omega\left(k\exp\left( \frac{n}{2k\gamma^{2}} \right)\right)
\]
\end{proof}
In summary, the random projection method in tandem with the Winnow algorithm seems to be well suited to the HD setting, where sparsity can be exploited to simplify learning.

\section{Conclusion}

To conclude, we lay out several research directions related to HD computing we believe it would be of particular interest to further explore. There are several interesting open problems related to encoding. Our analysis established preservation of only the most basic forms of structure in data. Can encoding procedures satisfying the desiderata of HD computing be designed that capture other forms of structure? The quantized random Fourier feature construction discussed in Section \ref{section:euclidean-encoding} presents one such option, but is only applicable to structure that can be captured using a shift-invariant kernel on a Euclidean space. For instance, can we devise encoding methods that exploit low-dimensional manifold structure in the data or which are adaptive and can be learned from a particular data set?

Several recent works have claimed, based on empirical evidence, that HD computing evinces one-shot learning \shortcite{burrello2018one,imani2017voicehd,rahimi2018efficient} in which a single labeled example is needed to learn a generalizable classifier \shortcite{thrun1996learning,lake2011one}. However, this work has focused on settings in which specialized hand-crafted features could be extracted, and it is not clear to us that existing encoding procedures would lead to one-shot classifiers absent such outside information. We would be interested to explore whether the HD representation makes one-shot learning easier in any broader sense. We expect this will necessitate the use of more sophisticated encoding procedures that can learn salient properties of a given domain. For this latter point we see dictionary learning \shortcite{olshausen.field.96} as a promising avenue for developing adaptive encoding procedures. Dictionary learning is a well studied problem and can be solved using online and neurally plausible methods \shortcite{arora2015simple,mairal2010online} and would thus seem to be a promising avenue to address the limitations of existing encoding procedures without sacrificing the simplicity and neural plausibility of existing HD based methods.

\section*{Acknowledgements}

This work was supported in part by CRISP, one of six centers in JUMP, an SRC program sponsored by DARPA, in part by an SRC-Global Research Collaboration grant, GRC TASK 3021.001, GRC TASK 2942.001, DARPA-PA-19-03-03 Agreement HR00112090036, and also NSF grants 1527034, 1730158, 1826967, 2100237, 2112167, 2052809, 2003279, 1830399, 1911095, 2028040, and 1911095. 

\section*{Appendix A. Proofs of Selected Theorems}

\subsection*{A.1 Proof of Theorem 4}

\begin{proof}
The result is an immediate consequence of the Hanson-Wright inequality \shortcite{hanson1971bound,rudelson2013hanson} which holds that, for $\bmm{x}$ a centered, $d$-dimensional, $\sigma$-sub-Gaussian random vector, and $\bmm{A} \in \mathbb{R}^{d \times d}$ an arbitrary square matrix, the quadratic form $\bmm{x}^{T}\bmm{A}\bmm{x}$ obeys the following concentration bound:
\[
    \mathbb{P}(|\bmm{x}^{T}\bmm{A}\bmm{x} - \mathbb{E}[\bmm{x}^{T}\bmm{A}\bmm{x}]| \geq t) \leq 2\exp\left(-c\min\left( \frac{t^{2}}{\sigma^{4}\|\bmm{A}\|_{F}^{2}}, \frac{t}{\sigma^{2}\|\bmm{A}\|} \right)\right)
\]
where $c$ is a positive absolute constant, $\|\bmm{A}\|_{F}^{2} = \sum_{i,j} |\bmm{A}_{ij}|^{2}$ is the Frobenius norm and $\|\bmm{A}\| = \max_{\|\bmm{x}\| \leq 1}\|\bmm{A}\bmm{x}\|$ is the operator norm. The result follows by taking $\bmm{A}$ to be the $d \times d$ identity matrix, in which case $\bmm{x}^{T}\bmm{I}_{d}\bmm{x} = \|\bmm{x}\|_{2}^{2}$, and union bounding over all $m$ symbols in the alphabet.
\end{proof}

\subsection*{A.2 Proof of Theorem 7}

\begin{proof}
Fix some $a \notin \mathcal{S}$. As described in Theorem \ref{thm:incoherence}, the quantity $\langle \phi(a), \phi(a') \rangle$ is sub-Gaussian with parameter at most $L_{\max}^{2}\sigma^{2}$, where $L_{\max} = \max_{a} \|\phi(a)\|$. Then, again using the fact that sub-Gaussianity is preserved under sums, by Hoeffding's inequality we have:
\[
    \mathbb{P}\left(\left| \sum_{a' \in \S} \langle \phi(a), \phi(a') \rangle \right| \geq \tau L^{2}\right) \leq 2\exp\left( -\frac{\tau^{2}L^{4}}{2sL_{\max}^{2}\sigma^{2}} \right) \leq 2\exp\left( -\frac{\kappa\tau^{2}L^{2}}{2s\sigma^{2}} \right)
\]
where $\kappa = L^{2}/L_{\max}^{2}$. The result follows by union bounding over all $m$ possible $a$.
\end{proof}

\subsection*{A.3 Proof of Theorem 9}

\begin{proof}
Expanding the dot product between the two representations:
\begin{align*}
    \frac{1}{L^{2}}\langle \phi(\mathcal{S}), \phi(\mathcal{S}') \rangle &= \frac{1}{L^{2}}\sum_{a\in \mathcal{S} \cap \mathcal{S}'} \langle \phi(a), \phi(a) \rangle + \frac{1}{L^{2}}\sum_{a \in \mathcal{S}} \sum_{a' \in \mathcal{S}'\setminus\{a\}}\langle \phi(a), \phi(a')\rangle \\
    &\leq |\mathcal{S} \cap \mathcal{S}'| + ss'\mu .
\end{align*}
The other direction is analogous.
\end{proof}

\subsection*{A.4 Proof of Theorem 10}

\begin{proof}
Consider some symbol $a \in \mathcal{A}$. In the event $a \in \mathcal{S}$:
\[
    \langle \phi(a), \phi(\mathcal{S}) + \Delta_{\S} \rangle = \langle \phi(a), \phi(\mathcal{S}) \rangle + \langle \phi(a), \Delta_{\mathcal{S}} \rangle \geq L^{2} - sL^{2}\mu - \rho
\]
and when $a \notin \mathcal{S}$:
\[
    \langle \phi(a), \phi(\mathcal{S}) + \Delta_{\S} \rangle \leq sL^{2}\mu + \rho
\]
Therefore we can decode correctly if:
\[
    \frac{\rho}{L^{2}} + s\mu < \frac{1}{2}
\]
\end{proof}

\subsection*{Proof of Lemma 12}

\begin{proof}
Consider first the case of passive noise. Fix some $a \in \mathcal{A}$. Noting that $\langle \phi(a), \Delta_{\S} \rangle$ is the sum of $d$ terms bounded in $[-c,c]$, another application of Hoeffding's inequality and the union bound will show:
\[
    \mathbb{P}(\exists\, a \text{ s.t. } |\langle \phi(a), \Delta_{\S} \rangle| \geq \rho) \leq 2m\exp\left( -\frac{\rho^{2}} {2c^{2} d} \right).
\]
Therefore, with probability $1-\delta$, we have that $\Delta_{\S}$ is $\rho$-bounded for $\rho \leq c\sqrt{2d\ln(2m/\delta)}$. Noting that $L = \sqrt{d}$ exactly, the result follows by applying Theorem \ref{thm:noisy-decoding}.

Now let us consider the adversarial case in which $\|\Delta_{\S}\|_{1} \leq \omega s d$. We first observe that $|\langle \phi(a), \Delta_{\S} \rangle| \leq \|\phi(a)\|_{\infty}\|\Delta_{\S}\|_{1} \leq \omega s d$. Then, applying Theorem \ref{thm:noisy-decoding} we obtain:
\[
    \frac{\omega s d}{d} + s\mu < \frac{1}{2} \Rightarrow \omega < \frac{1}{2s} - \mu
\]
as claimed.
\end{proof}

\subsection*{Proof of Theorem 14}

\begin{proof}
Note first that $\|\phi(a) \otimes \psi(f)\|_{2} = \|\phi(a)\|_{2}$. Then, fixing $a,a'$ and $f$, by Hoeffding's inequality:
\[
    \mathbb{P}(|\langle \phi(a), \phi(a') \otimes \psi(f) \rangle| \geq \mu L^{2}) \leq 2\exp \left( -\frac{L^{4}\mu^{2}}{2\sigma^{2}||\phi(a')||_{2}^{2}} \right) \leq 2\exp \left(-\frac{k\mu^{2}L^{2}}{2\sigma^{2}}\right)
\]
where we have again defined $\kappa = (\min_{a} \|\phi(a)\|_{2}^{2})/(\max_{a'} \|\phi(a')\|_{2}^{2})$. The result follows by the union bound over all $< nm^{2}/2$ combinations of $a,a',f$.
\end{proof}

\subsection*{Proof of Theorem 17}

\begin{proof}
Expanding:
\[
    ||\phi(\bmm{x}) - \phi(\bmm{x}')||^{2}_{2} = ||\phi(\bmm{x})||_{2}^{2} + ||\phi(\bmm{x}')||_{2}^{2} - 2\langle \phi(\bmm{x}), \phi(\bmm{x}') \rangle
\]
Note first that $\|\phi(\bmm{x})\|_{2}^{2} = nd + \Delta$, where $\Delta$ is a mean-zero noise term due to cross-talk between the codewords. Neglecting minor errors from the ceiling function, the dot-product expands to:
\begin{align*}
    \langle \phi(\bmm{x}), \phi(\bmm{x}') \rangle &= \sum_{i=1}^{n} \langle \phi(x_{i}) \otimes \psi(f_{i}), \phi(x_{i}') \otimes \psi(f_{i}) \rangle + \sum_{i \ne j} \langle \phi(x_{i}) \otimes \psi(f_{i}), \phi(x_{j}') \otimes \psi(f_{j}) \rangle \\
    &= \sum_{i=1}^{n} \langle \phi(x_{i}), \phi(x_{i}') \rangle + \Delta'
    = \sum_{i=1}^{n} d(1-|a(x_{i}) - a(x_{i}')|) + \Delta' \\
    &= d(n - \|\bmm{x}-\bmm{x}'\|_{1}) + \Delta'
\end{align*}
where $a(x_{i})$ is taken to be the centroid corresponding to $x_{i}$ and $\Delta'$ is another noise term due to crosstalk. Putting both together and noting that $\Delta,\Delta' \leq n^{2}d\mu$ we have:
\[
    2d(\|\bmm{x}-\bmm{x}'\|_{1} - 2n^{2}\mu) \le \|\phi(\bmm{x}) - \phi(\bmm{x}')\|^{2}_{2} \le 2d(\|\bmm{x}-\bmm{x}'\|_{1} + 2n^{2}\mu)
\]
where the incoherence can be bounded as in Equation \ref{col:id-incoherence}.
\end{proof}

\vskip 0.2in
\typeout{}
\bibliographystyle{theapa}
\bibliography{sample}

\end{document}